\def\eqref#1{equation~\ref{#1}}
\def\1{\bm{1}}
\DeclareMathAlphabet{\mathsfit}{\encodingdefault}{\sfdefault}{m}{sl}
\SetMathAlphabet{\mathsfit}{bold}{\encodingdefault}{\sfdefault}{bx}{n}
\tikzset{
	-Latex,auto,node distance =1 cm and 1 cm,semithick,
	state/.style ={ellipse, draw, minimum width = 0.7 cm},
	point/.style = {circle, draw, inner sep=0.04cm,fill,node contents={}},
	bidirected/.style={Latex-Latex,dashed},
	el/.style = {inner sep=2pt, align=left, sloped}
}
\titlespacing{\subsection}{0pt}{\parskip}{-\parskip}
\newtheorem{thm}{Theorem}[section]
\newtheorem{defn}[thm]{Definition}
\newenvironment{proof*}[1][\proofname]{\par
	\pushQED{\qed}%
	\normalfont \partopsep=\z@skip \topsep=\z@skip
	\trivlist
	\item[\hskip\labelsep
	\itshape
	#1\@addpunct{.}]\ignorespaces
}{%
	\popQED\endtrivlist\@endpefalse
}
\title{SoftHebb: Bayesian Inference in Unsupervised Hebbian Soft Winner-Take-All Networks}
\begin{document}
	\maketitle
\begin{table}[ht]
	\centering
	\begin{tabular}{c}
		\textbf{Timoleon Moraitis}$\boldsymbol{^*}$\\
		Huawei Technologies \\
		Zurich Research Center, Switzerland\\
		\rule{\z@}{24pt}\\
	\end{tabular}
	\begin{tabular}{cc}
	\textbf{Dmitry Toichkin} & \textbf{Adrien Journé}\\
	Moscow & Huawei Technologies \\
	Russia & Zurich Research Center, Switzerland \\
	\rule{\z@}{24pt}\\
	\textbf{Yansong Chua} & \textbf{Qinghai Guo}$\boldsymbol{^*}$\\
	Huawei Technologies & Huawei Technologies \\
	ACS Lab, Shenzhen, China & ACS Lab, Shenzhen, China\\
	\end{tabular}
\end{table}
\vskip 0.2in minus 0.1in
\footnotetext{$^*$Corresponding authors. \{\texttt{timoleon.moraitis}, \texttt{guoqinghai}\} \texttt{@huawei.com}}

	\begin{abstract}
		Hebbian plasticity in winner-take-all (WTA) networks is highly attractive for neuromorphic on-chip learning, owing to its efficient, local, unsupervised, and on-line nature. Moreover, its biological plausibility may help overcome important limitations of artificial algorithms, such as their susceptibility to adversarial attacks, and their high demands for training-example quantity and repetition. However, Hebbian WTA learning has found little use in machine learning (ML), likely because it has been missing an optimization theory compatible with deep learning (DL). Here we show rigorously that WTA networks constructed by standard DL elements, combined with a Hebbian-like plasticity that we derive, maintain a Bayesian generative model of the data. Importantly, without any supervision, our algorithm, SoftHebb, minimizes cross-entropy, i.e.\ a common loss function in supervised DL. We show this theoretically and in practice. The key is a ``soft'' WTA where there is no absolute ``hard'' winner neuron.
		Strikingly, in shallow-network comparisons with backpropagation (BP), SoftHebb shows advantages beyond its Hebbian efficiency. Namely, it converges in fewer iterations, and is significantly more robust to noise and adversarial attacks. Notably, attacks that maximally confuse SoftHebb are also confusing to the human eye, potentially linking human perceptual robustness, with Hebbian WTA circuits of cortex. Finally, SoftHebb can generate synthetic objects as interpolations of real object classes. All in all, Hebbian efficiency, theoretical underpinning, cross-entropy-minimization, and surprising empirical advantages, suggest that SoftHebb may inspire highly neuromorphic and radically different, but practical and advantageous learning algorithms and hardware accelerators.
\end{abstract}

\section{Introduction}
State-of-the-art (SOTA) artificial neural networks (ANNs) achieve impressive results in a variety of machine intelligence tasks \citep{sejnowski2020unreasonable}. However, they largely rely on mechanisms that diverge from the original inspiration from biological neural networks \citep{bengio2015towards, illing2019biologically}. As a result, only a small part of this prolific field also contributes to computational neuroscience. In fact, biological implausibility is also an important issue for machine intelligence. Despite their impressive performance, ANNs often neglect properties that are present in biological systems, and these properties could offer a path to the next generation of artificial intelligent systems \citep{zador2022toward}.
Namely, neuromorphic computing has been advancing machine intelligence in energy efficiency, and recent evidence shows that it improves also conventional metrics of state-of-the-art performance such as accuracy, reward, or speed. For example, spike-based models achieve processing speed and energy efficiency through the imitation of biological neuronal activations, without trading off performance \citep{jeffares2022spikeinspired}, or with minimal trade-offs \citep{bittar2022surrogate}; short-term plasticity improves the performance of neural networks in dynamic tasks such as video processing, navigation, robotics, and video games \citep{moraitis2020shortterm, rodriguez2022short}; efference copies advance self-supervised learning \citep{scherr2022self}; and dendritic computations increase the computational power of individual neurons \citep{poirazi2020illuminating, sarwat2022chalcogenide}.
In this work instead we focus on a different neuromorphic aspect, namely a synaptic plasticity mechanism for learning that is strongly supported by biological evidence, aiming here as well not only for efficiency but also for other advantages over conventional DL performance.
\begin{figure*}[h]
			\centering
	\includegraphics[width = 1.0\linewidth]{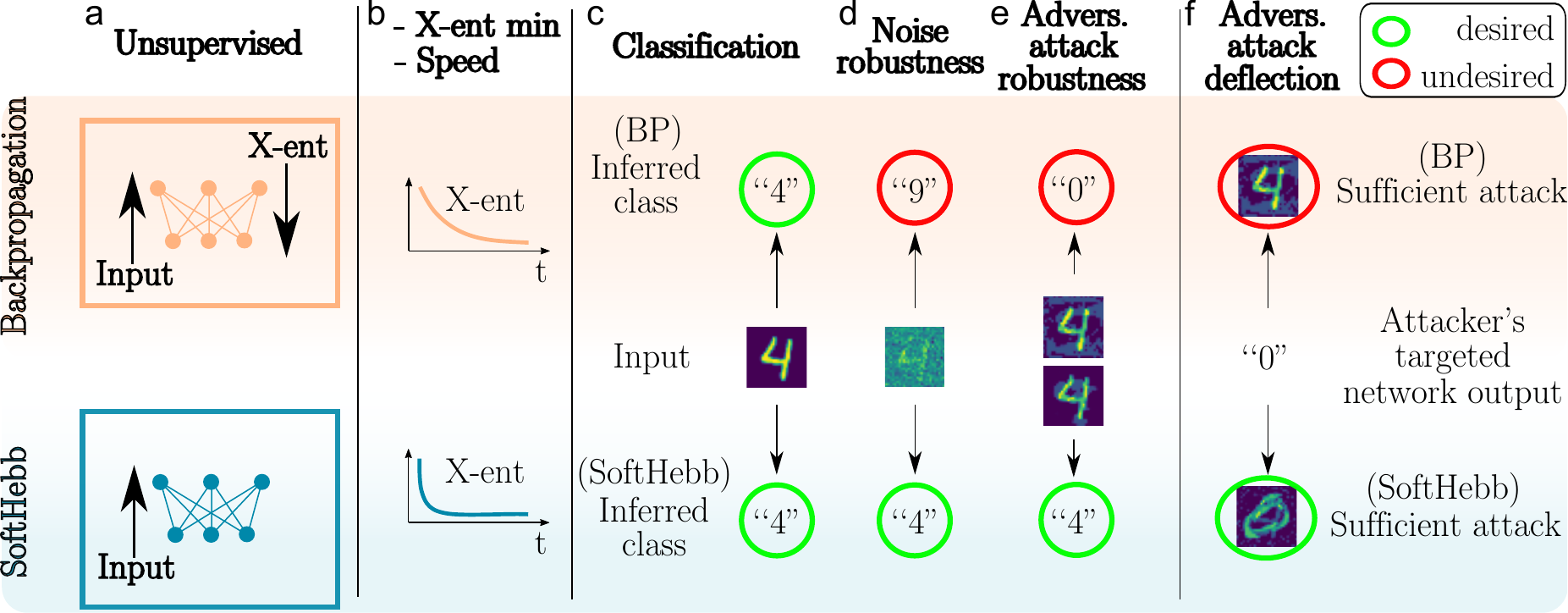}
	\caption{Schematic: summary of the key properties of SoftHebb, contrasted with backpropagation. (a) Unsupervised: SoftHebb uses no supervision by top-down signals, such as cross-entropy (``X-ent''). (b) X-ent minimization \& speed: Nevertheless SoftHebb minimizes the cross-entropy loss under certain assumptions. Moreover, it converges faster than backpropagation in number of learning iterations. (c) Classification: SoftHebb's unsupervised algorithm can be used to cluster an input dataset into classes, either on its own through its Bayesian inference of the classes as hidden causes of the input, or with an added supervised linear classifier. E.g.\ with an unperturbed image of a handwritten digit ``4'' (middle row), networks trained with both backprop (top) and SoftHebb (bottom) perform well and recognize the digit correctly (green circles). (d) Noise robustness: When Gaussian noise is added to the input at inference, the backprop-trained network misclassifies the digit (red circle), whereas SoftHebb is robust. (e) Adversarial attack robustness: With a white-box adversarial attack, the digit's pixels are perturbed to maximize the loss of each specific network, with its parameters known to the attacker. Note that the images (middle row) have subtle changes compared to the original input. The attack results in a different image targetting each network. The attack is successful for the backprop-trained network, which misclassifies the digit (red circle) as a ``zero''. SoftHebb on the other hand remains robust. (f) Adversarial attack deflection: An attacker perturbs the input of class ``four'' targetting a network output of class ``zero''. The attacker chooses an intensity that suffices to ascertain the attack's success with a high probability (see Fig. \ref{fig:robustness}). The attack of the backprop-trained network succeeds while the image still appears as a digit ``four''. In the case of SoftHebb, the attacker must truly generate an image of a digit ``zero'' (green circle) to succeed, i.e.\ SoftHebb deflects this adversarial attack attempt.}
	\label{fig:schematic}
\end{figure*}
\paragraph*{Inefficiencies of conventional Deep Learning.} Several limitations of conventional DL appear to be in contrast with some biological learning processes, and could therefore potentially be addressed by neuromorphic learning algorithms. For instance, ANN training often demands very large and labelled datasets, which are costly to generate. When labels are unavailable, self-supervised learning schemes exist, where supervisory error signals generated by the network itself are exploited and backpropagated from the output towards the input to update the network's parameters \citep{goodfellow2014generative, devlin2018bert, chen2020simple, bardes2021vicreg, scherr2022self}. However, this global propagation of signals in deep networks introduces another limitation. Namely, it prevents the implementation of efficient distributed computing hardware that would be based on only local signals from neighbouring physical nodes in the network, it requires teaching currents to flow throughout the network, and is in contrast to the local synaptic plasticity rules that partly govern biological learning.
Several pieces of work have been addressing parts of the biological implausibility and hardware-inefficiency of BP in ANNs \citep{crick1989recent, bengio2015towards, lillicrap2016random, nokland2016direct, guerguiev2017towards, pfeiffer2018deep, illing2019biologically, pogodin2020kernelized, millidge2020predictive, pogodin2021towards, payeur2021burst}, such as requirements of exactly symmetric forward and backward weights or the waiting time caused by the network's forward-backward pass between two training updates in a layer. These are known as the weight-transport \citep{grossberg1987competitive, lillicrap2016random} and update-locking \citep{czarnecki2017understanding, frenkel2021learning} problems of BP.
Recently, an approximation to BP that is mostly Hebbian, i.e.\ relies mostly on pre- and post-synaptic activity of each synapse, has been achieved by reducing the global error requirements to 1-bit information \citep{pogodin2020kernelized}. Two schemes that further localize the signal that is required for a weight update are Equilibrium Propagation \citep{scellier2017equilibrium} and Predictive Coding \citep{millidge2020predictive}. Both methods approximate BP through Hebbian-like learning, by delegating the global aspect of the computation, from a global error signal, to a global convergence of the network state to an equilibrium. This equilibrium is reached through several iterative steps of feed-forward and feed-back communication throughout the network, before the ultimate weight update by one training example. The biological plausibility and hardware-efficiency of this added iterative process of signal propagation are open questions that begin to be addressed \citep{ernoult2020equilibrium}. Therefore, even though there has been significant progress in dealing with some of the inefficiencies and biological implausibilities of BP, this has not been entirely possible, because these approaches aim to approximate BP, rather than learn with a radically different mechanism.

\paragraph*{Adversarial attacks of ANNs. Deflection by humans.} Moreover, learning through BP, and presumably also its approximations, has another indication of biological implausibility, which also significantly limits ANN applicability. Namely, it produces networks that are confused by small adversarial perturbations of the input, which are imperceptible by humans.
It has recently been proposed that a defence strategy of ``deflection'' of adversarial attacks may be the ultimate solution to that problem \citep{qin2020deflecting}. Through this strategy, to cause confusion in the network's inferred class, the adversary is forced to generate such a changed input that it really belongs to the distribution mode of a different input class. Intuitively, but also strictly by definition, this deflection is achieved if a human assigns to the perturbed input the same label that the network does.
Deflection of adversarial attacks in ANNs has been demonstrated by an elaborate scheme that is based on \textit{detecting} the attacks \citep{qin2020deflecting}. However, the human ability to deflect adversarial perturbations likely does not rely on detecting them, but rather on effectively ignoring them, making the deflecting type of robustness an emergent property of biological computation rather than a defence mechanism.
The principles that underlie this biological robustness are unclear, but it might emerge from the distinct algorithms that govern learning in the brain.

\paragraph*{Hebbian WTA.}
Therefore, what is missing is a biologically plausible model that can learn from fewer data-points, without labels, through local plasticity, and without feedback from distant layers. This model could then be tested for emergent adversarial robustness and deflection of adversarial attacks.
A good candidate category of biological networks and learning algorithms is that of competitive learning. Neurons that compete for their activation through lateral inhibition are a common connectivity pattern in the superficial layers of the cerebral cortex \citep{douglas2004neuronal, binzegger2004quantitative}. This pattern is described as winner-take-all (WTA), because competition suppresses activity of weakly activated neurons, and emphasizes strong ones. WTA competition is generally categorized into two types, namely hard WTA, where the winning neuron is the only one active, and soft WTA, where the non-winning neurons are not fully suppressed \citep{binas2014learning}. Combined with Hebbian-like plasticity rules, i.e.\ update rules based on correlated pre- and post-synaptic activity, WTA connectivity gives rise to competitive-learning algorithms. These networks and learning schemes have been long studied \citep{von1973self} and a large literature based on simulations and analyses describes their functional properties. A WTA neuronal layer, depending on its specifics, can restore missing input signals \citep{rutishauser2011collective, diehl2016learning}, perform decision making i.e.\ winner selection \citep{hahnloser1999feedback, maass2000computational, rutishauser2011collective}, and generate oscillations such as those that underlie brain rhythms \citep{cannon2014neurosystems}. Perhaps more importantly, its neurons can learn to become selective to different input patterns, such as orientation of visual bars in models of the primary visual cortex \citep{von1973self}, MNIST handwritten digits \citep{nessler2013PLoS, diehl2015unsupervised, krotov2019unsupervised}, CIFAR-10 objects \citep{krotov2019unsupervised}, spatiotemporal spiking patterns \citep{nessler2013PLoS}, and can adapt dynamically to model changing objects \citep{moraitis2020shortterm}.
The WTA model is indeed biologically plausible, Hebbian plasticity is local, and learning is input-driven, relying on only feed-forward communication of neurons -- properties that seem to address several of the limitations of ANNs. However, the model's applicability is limited to simple tasks. That is partly because the related theoretical literature remains surprisingly unsettled, despite its long history, and the strong and productive community interest \citep{sanger1989optimal, foldiak1989adaptive, foldiak1990forming, linsker1992local, olshausen1996emergence, bell1995information, olshausen1997sparse, lee1999independent, nessler2013PLoS, pehlevan2014hebbian, hu2014hebbian, PehlevanNIPS2015, pehlevan2017clustering, isomura2018error}.

\paragraph*{Necessity for new theoretical foundation of Hebbian WTA learning.}
A very relevant theory in this direction was described by \citet{nessler2009stdp, nessler2013PLoS}. That work showed that winner-take-all circuits implement Bayesian computation and, combined with local plasticity, they implement a type of expectation-maximization. The specific plasticity rule for a synapse connecting a presynaptic neuron $i$ with activation $x_i$ to a postsynaptic neuron $k$ with WTA output $y_k$ in that case was
\begin{equation}
   \Delta w_{ik}^{(Nessler)}= 
\begin{cases}
	\eta\left(e^{-w_{ki}}-1\right),& \text{if } x_i=1 \text{ and } y_k=1 \\
	-\eta,& \text{if } x_i=0 \text{ and } y_k=1 \\
	0,              & \text{if } y_k=0.
\end{cases}
\end{equation}

However, that theory concerned WTA models that are largely incompatible with ANNs and thus less practical. Namely, it assumed spiking and stochastic neurons, input values have to be discretized, and each individual input feature to a layer must be encoded through special population coding by multiple binary neurons. Moreover, it was only proven for neurons with an exponential activation function. It remains therefore unclear which specific plasticity rule and structure could optimize an ANN WTA for Bayesian inference. It is also unclear how to minimize a common loss function such as cross-entropy despite unsupervised learning, and how a WTA could represent varying families of probability distributions. In summary, on the theoretical side, an algorithm that is simultaneously normative, based on WTA networks and Hebbian unsupervised plasticity, performs Bayesian inference, and, importantly, is composed of conventional ANN elements, with conventional input encoding, and is rigorously linked to modern ANN tools such as cross-entropy loss, would be an important advance but has been missing. On the practical side, such a theoretically grounded approach could be the key missing piece for bringing the multiple efficiency facets of biological learning to deep learning.
That is, it could provide insights into how a WTA microcircuit could participate in larger-scale computation by deep cortical or artificial networks.
Furthermore, such a theoretical foundation could also reveal unknown advantages of Hebbian plasticity in WTA networks.
Recently, when WTA networks were studied in a theoretical framework compatible with conventional machine learning (ML), but in the context of short-term (STP) as opposed to long-term Hebbian plasticity, it did result in surprising practical advantages over supervised ANNs \citep{moraitis2020shortterm}, and was followed-up by showing significant benefits also in other networks and multiple advanced tasks \citep{rodriguez2022short}. Theoretical grounding may therefore go beyond merely narrowing the accuracy gap from BP in simple benchmarks, and even indicate scenarios where Hebbian plasticity outperforms.

\paragraph*{Our contributions.}
To this end, here we present a mostly theoretical work on Hebbian WTA networks. We construct ``SoftHebb'', a biologically plausible WTA model that is based on standard rate-based neurons as in ANNs, can accommodate various activation functions, and learns without labels, using local plasticity and only feed-forward communication, i.e.\ the properties we seek for efficient learning that is compatible with DL in ANNs.
Importantly, SoftHebb is equipped with a simple normalization of the layer's activations, and an optional temperature-scaling mechanism \citep{hinton2015distilling}, producing a soft WTA instead of selecting a single ``hard'' winner neuron. This allows us to prove formally that a SoftHebb layer is a generative mixture model that objectively minimizes its Kullback-Leibler (KL) divergence from the input distribution through Bayesian inference, thus providing a new formal ML-theoretic perspective of these networks. An important corollary that we derive is that the layer minimizes its cross-entropy with the input distribution.
We complement our main results, which are theoretical, with experiments that are small-scale but produce intriguing results. As a generative model, SoftHebb has a broader scope than classification, but we test it on image classification tasks.
Surprisingly, in addition to overcoming several inefficiencies of BP, the unsupervised WTA model also outperforms a supervised two-layer perceptron in several aspects: learning speed and accuracy in the first presentation of the training dataset, robustness to noisy data and to one of the strongest white-box adversarial attacks, i.e.\ projected gradient descent (PGD) \citep{madry2017towards}, and without any explicit defence. Interestingly, the SoftHebb model also exhibits inherent properties of deflection \citep{qin2020deflecting} of the adversarial attacks, and generates object interpolations.

\section{Theoretical Results}
We will now derive the theory underpinning SoftHebb. The resulting ML-theoretic probabilistic model and the equivalent neural network are summarized in \cref{fig:diagram}, whereas a succinct description of the learning algorithm is provided in \cref{alg:SoftHebb}.

\begin{figure*}[h]
	\includegraphics[width = 1.0\linewidth]{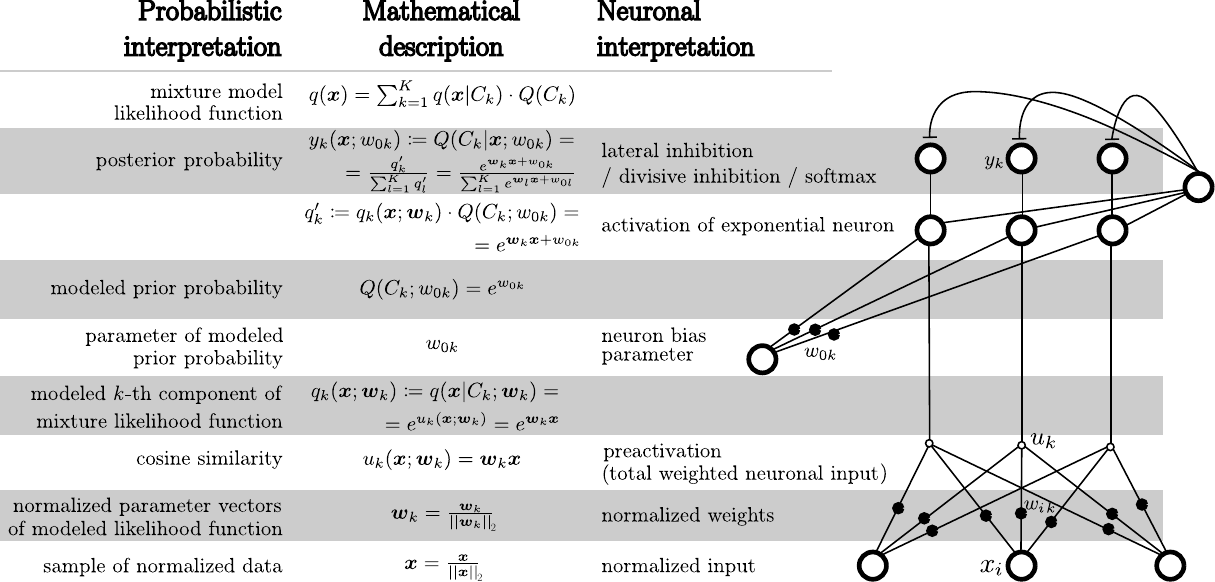}
	\caption{The soft WTA model used in SoftHebb. The network graph is shown on the right. The input to the layer is shown at the bottom and the output is at the top. Each depicted computational element in the diagram is in a white or grey row that also includes the element's description on the left.}
	\label{fig:diagram}
\end{figure*}

\paragraph*{Overview of the derivation.}
The goal of the derivation is to find a WTA neural network and a plasticity rule that optimizes the probabilistic model represented by the network, for a given input distribution. The aimed optimization is specifically the minimization of the KL-divergence between the model distribution and the input distribution, so we aim to derive a plasticity rule that can learn the parameters that achieve this minimum.
The key steps in our derivations are the following. First we define the assumptions about the input and we define a parametrized Bayesian model that will form the backbone of the neural network. Then we determine the optimal parameters of the probabilistic model, i.e.\ those that imply minimum KL divergence from the input. Subsequently, we describe how the model is equivalent to a soft WTA network. Then we describe SoftHebb's plasticity rule, and we show that the parameters that we found as optimal are the plasticity rule's equilibrium, which shows that the plasticity rule updates the network to maintain its Bayesian model optimally. Finally, we describe how given certain assumptions the network minimizes cross-entropy from the input labels, despite the absence of the labels or other supervision. The detailed proofs of the Theorems are provided in the Methods section.

\begin{defn}[\bfseries The input assumptions]
	\label{def:data}
	Each observation $_j\boldsymbol{x} \in \mathbb{R}^n$ is generated by a hidden ``cause'' $_jC$ from a finite set of $K$ possible such causes:
	$_jC \in \{C_k,\, \forall k \leq K\in \mathbb{N}\}.$
	Therefore, the data is generated by a mixture of the probability distributions attributed to each of the $K$ classes $C_k$:
	\begin{equation}
		p(\boldsymbol{x})=\sum_{k=1}^{K}p(\boldsymbol{x}|C_k)P(C_k).
		\label{eq:pstar}
	\end{equation}
	$\boldsymbol{x}$ is a vector quantity, and its dimensions, i.e. components $x_i$, are conditionally independent from each other, i.e.\ $p(\boldsymbol{x})=\prod_{i=1}^{n}p(x_i). \label{eq:independence}$
	The number $K$ of the true causes or classes of the data is assumed to be known.
	\stepcounter{subsection}
\end{defn}
The term ``cause'' is used here in the sense of causal inference. It is important to emphasize that the true cause of each input is hidden, i.e.\ not known. In the case of a labelled dataset, labels commonly correspond to causes, and the labels are deleted before presenting the training data to the model.
We choose a mixture model that corresponds to the data assumptions but is also interpretable in neural terms (Paragraph \ref{sec:neuro_exp}):
\begin{defn}[\bfseries The generative probabilistic mixture model]
	\label{def:model}
	We consider a mixture model distribution $q$:
	$q(\boldsymbol{x})=\sum_{k=1}^{K}q(\boldsymbol{x}|C_k)\,Q(C_k),$
	approximating the data distribution $p$.
	We choose specifically a mixture of exponentials and we parametrize $Q(C_k;w_{0k})$ also as an exponential, specifically:
	\begin{align}
		q(x_i|C_k;w_{ik})&=e^{w_{ik}\cdot \frac{x_i}{||\boldsymbol{x}||_2}},\, \forall k \label{eq:g_param}\\
		Q(C_k;w_{0k})&=e^{w_{0k}},\,\forall k.\label{eq:g0_param}
	\end{align}
	In addition, the parameter vectors are subject to the normalization constraints:
	$||\boldsymbol{w}_k||=1,\, \forall k$,
	and
	$
	\sum_{k=1}^{K}e^{w_{0k}}=1. \label{eq:norm_0}$
	\stepcounter{subsection}
\end{defn}
The model we have chosen is a reasonable choice because it factorizes similarly to the data of Definition \ref{def:data}:
\begin{align}
	\begin{split}
	q_k&\coloneqq q(\boldsymbol{x}|C_k; \boldsymbol{w}_k)=\prod_{i=1}^{n}q(x_i|C_k;w_{ik})\\
	&=e^{\sum_{i=1}^{n}w_{ik}\frac{x_i}{||\boldsymbol{x}||}}=e^{u_k}, \label{eq:multinomial}
	\end{split}
\end{align}
where $u_k=\frac{ \boldsymbol{w}_k\cdot \boldsymbol{x}}{|| \boldsymbol{w}_k||\cdot||\boldsymbol{x}||}$, i.e.\ the cosine similarity of the two vectors.
A similar probabilistic model was used in related previous theoretical work \citep{nessler2009stdp, nessler2013PLoS, moraitis2020shortterm}, but for different data assumptions, and with certain further constraints to the model. Namely, \citet{nessler2009stdp, nessler2013PLoS} considered data that was binary, and created by a population code, while the model was stochastic. These works provide the foundation of our derivation, but here we consider the more generic scenario where data are continuous-valued and input directly into the model, which is deterministic and, as we will show, more compatible with standard ANNs. In \citet{moraitis2020shortterm}, data had particular short-term temporal dependencies, whereas here we consider the distinct case of independent and identically distributed (i.i.d.) input samples.
The Bayes-optimal parameters of a model mixture of exponentials can be found analytically as functions of the input distribution's parameters, and the model is equivalent to a soft winner-take-all neural network \citep{moraitis2020shortterm}. After describing this, we will prove here that Hebbian plasticity of synapses combined with local plasticity of the neuronal biases sets the parameters to their optimal values.

\begin{thm}[\bfseries The optimal parameters of the model] \label{thm:optimal}
	The parameters that minimize the KL divergence of such a mixture model from the data are, for every $k$,
	\begin{gather}
		\prescript{}{opt}{}w_{0k}=\ln P(C_k)
		\label{eq:G0}\\
		\text{and } \prescript{}{opt}{}\boldsymbol{w}^*_k=\frac{ \prescript{}{opt}{}\boldsymbol{w}_k}{|| \prescript{}{opt}{}\boldsymbol{w}_k||}=\frac{\mu_{p_k}\left(\boldsymbol{x}\right)}{||\mu_{p_k}\left(\boldsymbol{x}\right)||},
	\end{gather}
	where $c\in\mathbb{R}^+,\,\prescript{}{opt}{}\boldsymbol{w}_k=c\cdot \mu_{p_k}\left(\boldsymbol{x}\right),\, 
	\mu_{p_k}\left(\boldsymbol{x}\right)$ is the mean of the distribution $p_k$, and $p_k\coloneqq p(\boldsymbol{x}|C_k)$.
	\stepcounter{subsection}
\end{thm}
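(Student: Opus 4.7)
My plan is to minimize $D_{\mathrm{KL}}(p\|q)$ --- equivalently, to maximize $\mathbb{E}_p[\ln q(\boldsymbol{x})]$, since $H(p)$ is a constant in the parameters --- by exploiting the shared latent-variable structure of $p$ and $q$: both are mixtures over the same finite set of $K$ causes, so up to a relabelling of components we can align them pairwise. The chain rule for KL divergence,
\begin{equation*}
D_{\mathrm{KL}}\!\bigl(p(\boldsymbol{x},C)\,\|\,q(\boldsymbol{x},C)\bigr) \;=\; D_{\mathrm{KL}}(p(\boldsymbol{x})\|q(\boldsymbol{x})) \;+\; \mathbb{E}_{p(\boldsymbol{x})}\!\bigl[D_{\mathrm{KL}}(p(C|\boldsymbol{x})\|q(C|\boldsymbol{x}))\bigr],
\end{equation*}
shows that the joint KL upper-bounds the marginal KL we actually care about, and the joint in turn decomposes cleanly as $D_{\mathrm{KL}}(P\|Q) + \sum_k P(C_k)\,D_{\mathrm{KL}}(p_k\|q_k)$. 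This splits the optimization into two independent subproblems --- one for the mixture weights and one for each class-conditional factor --- which I can handle in turn.

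For the mixture weights I minimize $D_{\mathrm{KL}}(P\|Q)$ over $Q$ subject to the normalization $\sum_k e^{w_{0k}} = 1$. Gibbs' inequality (or a one-line Lagrangian in $w_{0k}$) gives $Q(C_k) = P(C_k)$, i.e.\ $\prescript{}{opt}{}w_{0k} = \ln P(C_k)$. For each class-conditional factor, I substitute the model form $\ln q_k(\boldsymbol{x}) = u_k = \boldsymbol{w}_k \cdot \boldsymbol{x}/(\|\boldsymbol{w}_k\|\,\|\boldsymbol{x}\|)$ from equation~(\ref{eq:multinomial}) into $\mathbb{E}_{p_k}[\ln q_k]$, so that under the unit-norm constraint $\|\boldsymbol{w}_k\|=1$ the objective reduces to the linear form $\boldsymbol{w}_k \cdot \mathbb{E}_{p_k}[\boldsymbol{x}/\|\boldsymbol{x}\|]$ on the unit sphere. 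Cauchy--Schwarz immediately gives $\boldsymbol{w}_k^{\star} \propto \mathbb{E}_{p_k}[\boldsymbol{x}/\|\boldsymbol{x}\|]$, which --- under the paper's convention that $\mu_{p_k}(\boldsymbol{x})$ absorbs the $1/\|\boldsymbol{x}\|$ factor, or equivalently when the data lies on a sphere of fixed radius --- matches the stated $\mu_{p_k}(\boldsymbol{x})/\|\mu_{p_k}(\boldsymbol{x})\|$. The unspecified positive constant $c$ then simply parametrizes the one-dimensional pre-normalization freedom in $\prescript{}{opt}{}\boldsymbol{w}_k$ before projection to the unit sphere.

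The main obstacle is the tightness of the joint-to-marginal chain-rule inequality: the joint-KL minimum coincides with the marginal-KL minimum only when the posteriors match, $p(C|\boldsymbol{x}) = q(C|\boldsymbol{x})$, at the candidate parameters. I would close this gap by arguing that the per-component fit achieved above is realizable within the family --- the conditional independence of the $x_i$ imposed in Definition~\ref{def:data} matches the multiplicative factorization of $q_k$ --- so that the posterior-mismatch term vanishes at the stated $(\boldsymbol{w}_k^{\star}, w_{0k})$ and the marginal inherits the joint minimum. As a cross-check I would also carry out the direct Lagrangian derivation on the marginal objective $\mathbb{E}_p[\ln q]$, whose stationarity conditions $Q(C_k) = \mathbb{E}_p[q(C_k|\boldsymbol{x})]$ and $\boldsymbol{w}_k \propto \mathbb{E}_p[q(C_k|\boldsymbol{x})\,\boldsymbol{x}/\|\boldsymbol{x}\|]$ are fixed-point equations that reduce to the stated closed forms precisely when $q(C_k|\boldsymbol{x}) = P(C_k|\boldsymbol{x})$; this matching-posteriors verification is the subtle step, and can ultimately be lifted from the mixture-of-exponentials derivation in the cited prior work.
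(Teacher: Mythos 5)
Your proposal follows essentially the same route as the paper's own proof: both split the problem into matching the priors (Gibbs' inequality giving $\prescript{}{opt}{}w_{0k}=\ln P(C_k)$) and minimizing each component divergence $D_{\mathrm{KL}}(p_k\|q_k)$ separately, and both reduce the latter to maximizing $\mathbb{E}_{p_k}[u_k]$, a linear functional of the unit vector $\boldsymbol{w}_k^*$ whose maximizer is proportional to the (normalized-input) mean --- you via Cauchy--Schwarz, the paper via the symmetry of the cosine similarity. The only substantive difference is that your chain-rule framing makes explicit the joint-versus-marginal gap that the paper dispatches with a one-line appeal to the independence of the components, so your version is, if anything, the more candid about where the decomposition relies on the component posteriors matching.
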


In other words, the optimal parameter vector of each component $k$ in this mixture is proportional to the mean of the corresponding component of the input distribution, i.e.\ it is a centroid of the component. In addition, the optimal parameter of the model's prior $Q(C_k)$ is the logarithm of the corresponding component's prior probability.
This Theorem's proof was provided in the supplementary material of \citet{moraitis2020shortterm}, but for completeness we also provide it in our Methods section. These centroids and priors of the input's component distributions, as well as the method of their estimation, however, are different for different input assumptions, and we will derive a learning rule that provably sets the parameters to their Maximum Likelihood Estimate for the inputs addressed here. The learning rule is a Hebbian type of synaptic plasticity combined with a plasticity for neuronal biases. Before providing the rule and the related proof, we will describe how our mixture model is equivalent to a WTA neural network.

\begin{algorithm}
	\caption{SoftHebb learning}\label{alg:SoftHebb}
	\begin{algorithmic}[1]
		\FORALL{neurons $k\in \{1,2,...,K\}$ in the layer,}
		\STATE initialize random weights and biases
		\ENDFOR
		\FORALL{training examples $\boldsymbol{x}$}
		\FORALL{neurons $k$}
		\STATE Calculate preactivation $u_k=\boldsymbol{w}_k\boldsymbol{x}$
		\ENDFOR
		\FORALL{neurons $k$}
		\STATE Optional: calculate activation $q'_k=h(u_k+w_{0k})$ \COMMENT{e.g. $h(x)=\exp(x)$}
		\STATE Calculate posterior (i.e.\ normalized activation) $y_k$ \COMMENT{e.g. Softmax}
		\ENDFOR
		\FORALL{neurons $k$}
		\FORALL{synapses $i$}
		\STATE calculate weight change $\Delta w_{ik}^{(SoftHebb)}=\eta \cdot y_k  \cdot \left(x_i-u_kw_{ik}\right)$
		\STATE update weight $w_{ik}\leftarrow w_{ik}+\Delta w_{ik}^{(SoftHebb)}$
		\ENDFOR
		\STATE calculate bias change $\Delta w_{0k}^{SoftHebb}=\eta e^{-w_{0k}}\left(y_k - e^{w_{0k}} \right)$
		\STATE update bias $w_{0k}\leftarrow w_{0k}+\Delta w_{0k}^{SoftHebb}$
		\ENDFOR
		\ENDFOR
	\end{algorithmic}
\end{algorithm}

\subsection{Equivalence of the probabilistic model to a WTA neural network}
\stepcounter{thm}
\label{sec:neuro_exp} The cosine similarity between the input vector and each centroid's parameters underpins the model (Eq.\ \ref{eq:multinomial}). This similarity is precisely computed by a linear neuron that receives normalized inputs
$\boldsymbol{x}^*\coloneqq\frac{\boldsymbol{x}}{||\boldsymbol{x}||}$ and normalizes its vector of synaptic weights:
$\boldsymbol{w}^*_k\coloneqq\frac{\boldsymbol{w}}{||\boldsymbol{w}||}$. Specifically, the neuron's summed weighted input
$u_k= \boldsymbol{w}^*_k\cdot\boldsymbol{x}^*$ then determines the cosine similarity of an input sample to the weight vector, thus computing the likelihood function of each component of the input mixture (Eq.\ \ref{eq:g_param}). It should be noted that even though $u_k$ depends on the weights of all input synapses, the weight values of other synapses do not need to be known to each updated synapse. Therefore, in the SoftHebb plasticity rule that we will present (Eq.\ \ref{eq:synplast}), the term $u_k$ is a local, postsynaptic variable that does not undermine the locality of the plasticity. The bias term of each neuron can store the parameter $w_{0k}$ of the prior $Q(C_k; w_{0k})$.
Based on these, it can also be shown that a set of $K$ such neurons can actually compute the Bayesian posterior, if the neurons are connected in a configuration that implements softmax. Softmax has a biologically-plausible implementation through lateral inhibition (divisive normalization) between neurons \citep{nessler2009stdp, nessler2013PLoS, moraitis2020shortterm}. Specifically, based on the model of Definition \ref{def:model}, the posterior probability is
\begin{equation}
	Q(C_k|\boldsymbol{x};\boldsymbol{w})=\frac{e^{u_k+w_{0k}}}{\sum_{l=1}^{K}e^{u_l+w_{0l}}}. \label{eq:Q_model}
\end{equation}
But in the neural description, $u_k+w_{0k}$ is the activation of the $k$-th linear neuron. That is, Eq.\ \ref{eq:Q_model} shows that the result of Bayesian inference of the hidden cause from the input $Q(C_k|\boldsymbol{x})$ is found by a softmax operation on the linear neural activations. In this equivalence, we will be using $y_k\coloneqq  Q(C_k|\boldsymbol{x};\boldsymbol{w})$ to symbolize the softmax output of the $k$-th neuron, i.e.\ the output after the WTA operation, interchangeably with $Q(C_k|\boldsymbol{x})$.
It can be seen in Eq.\ \ref{eq:Q_model} that the probabilistic model has one more, alternative, but equivalent neural interpretation. Specifically, $Q(C_k|\boldsymbol{x})$ can be described as the output of a neuron with exponential activation function (numerator in Eq.\ \ref{eq:Q_model}) that is normalized by its layer's total output (denominator). This is equally accurate, and more directly analogous to the biological description \citep{nessler2009stdp, nessler2013PLoS, moraitis2020shortterm}. This shows that the exponential activation of each individual neuron $k$ directly equals the $k$-th exponential component distribution of the generative mixture model (Eq.\ \ref{eq:multinomial}).
Therefore, the softmax-configured linear neurons, or equivalently, the normalized exponential neurons, fully implement the generative model of Definition \ref{def:model}, and also infer the Bayesian posterior probability given an input and the model parameters. However, the problem of calculating the model's parameters from data samples is a difficult one, if the input distribution's parameters are unknown. In the next sections we will show that this neural network can find these optimal parameters through Bayesian inference, in an unsupervised and on-line manner, based on only local Hebbian plasticity. It should be noted that the number of neurons $K$ must be chosen in advance, an aspect that may be regarded as human supervision.

\subsection{A Hebbian rule that optimizes the weights}
Several Hebbian-like rules exist and have been combined with WTA networks. For example, in the case of stochastic binary neurons and binary population-coded inputs, it has been shown that weight updates with an exponential weight-dependence find the optimal weights \citep{nessler2009stdp, nessler2013PLoS}. Oja's rule is another candidate \citep{oja1982simplified}. An individual linear neuron equipped with this learning rule finds the first principal component of the input data \citep{oja1982simplified}. A variation of Oja's rule combined with hard-WTA networks and additional mechanisms has achieved good experimental results performance on classification tasks \citep{krotov2019unsupervised}, but lacks the theoretical underpinning that we aim for. Here we propose a Hebbian-like rule for which we will show it optimizes the soft WTA's generative model. The rule is similar to Oja's rule, but considers, for each neuron $k$, both its linear weighted summation of the inputs $u_k$, and its nonlinear output of the WTA $y_k$:

\begin{equation}
	\label{eq:synplast}
	\boxed{\Delta w_{ik}^{(SoftHebb)}\coloneqq\eta \cdot y_k  \cdot \left(x_i-u_kw_{ik}\right),}
\end{equation}

where $w_{ik}$ is the synaptic weight from the $i$-th input to the $k$-th neuron, and $\eta$ is the learning rate hyperparameter. As can be seen, all involved variables are local to the synapse, i.e.\ only indices $i$ and $k$ are relevant. No signals from distant layers, from non-perisynaptic neurons, or from other synapses are involved.
By solving the equation $E[\Delta w_{ik}]=0$ where $E[\cdot]$ is the expected value over the input distribution, we can show that, with this rule, there exists a stable equilibrium value of the weights, and this equilibrium value is an optimal value according to Theorem \ref{thm:optimal}:
\begin{thm}
	\label{thm:equilib}
	The equilibrium weights of the SoftHebb synaptic plasticity rule are
	\begin{align}
		\begin{split}
			w_{ik}^{SoftHebb}&=c \cdot\mu_{p_k}(x_i)=\prescript{}{opt}w_{ik},\\ &\text{ where } c=\frac{1}{||\mu_{p_k}(\boldsymbol{x})||}.
		\end{split}
	\end{align}
\end{thm}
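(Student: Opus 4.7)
The plan is to take the expectation of the SoftHebb update under the data distribution and impose $E[\Delta w_{ik}]=0$ as the equilibrium condition. Dropping the nonzero factor $\eta$ and writing the result in vector form yields $E[y_k\,\boldsymbol{x}] = E[y_k u_k]\,\boldsymbol{w}_k$, so any equilibrium $\boldsymbol{w}_k$ must be collinear with $\boldsymbol{v}_k \coloneqq E[y_k\,\boldsymbol{x}]$. Writing $\boldsymbol{w}_k = \alpha\,\boldsymbol{v}_k / ||\boldsymbol{v}_k||$ and substituting back (using that $u_k$ is linear in $\boldsymbol{w}_k$ through the cosine form), a scalar consistency check forces $\alpha^2 = 1$; taking the positive root pins the equilibrium norm to $||\boldsymbol{w}_k||=1$, so the overall scalar $c$ must equal $1/||\boldsymbol{v}_k||$.

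The next step is to evaluate $\boldsymbol{v}_k$ explicitly in terms of the data mixture. Here I would invoke the self-consistency that, at the claimed fixed point, the combination of weights and biases renders the softmax output $y_k$ equal to the true Bayesian posterior $P(C_k|\boldsymbol{x})$, a statement that dovetails with Theorem \ref{thm:optimal} together with the separate bias plasticity result. Under this assumption, Bayes' rule gives
\begin{equation*}
    E[y_k\,\boldsymbol{x}] = \int P(C_k|\boldsymbol{x})\,\boldsymbol{x}\,p(\boldsymbol{x})\,d\boldsymbol{x} = P(C_k)\,\mu_{p_k}(\boldsymbol{x}).
\end{equation*}
The prior $P(C_k)$ is a positive scalar that cancels upon normalization of $\boldsymbol{v}_k$, leaving $\boldsymbol{w}_k = \mu_{p_k}(\boldsymbol{x}) / ||\mu_{p_k}(\boldsymbol{x})||$, which matches the optimum of Theorem \ref{thm:optimal} and yields $w_{ik} = c\cdot\mu_{p_k}(x_i)$ with the claimed $c$.

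To justify that the equilibrium is stable and not merely stationary, I would examine the dynamics of $||\boldsymbol{w}_k||^2$ by dotting the update with $\boldsymbol{w}_k$. This produces an Oja-type expression proportional to $y_k u_k\,(1 - ||\boldsymbol{w}_k||^2)$, which relaxes the norm toward unity whenever $y_k u_k > 0$; a linearization of the directional component around $\boldsymbol{v}_k/||\boldsymbol{v}_k||$ in the tangent space of the unit sphere then rules out escape in the transverse directions.

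The main obstacle is that the neurons are coupled through the softmax: $y_k$ depends on every weight and bias in the layer, so the fixed-point condition for one $\boldsymbol{w}_k$ is not truly independent of the others. The derivation leans on the self-consistent assumption that the whole layer sits at the Bayesian-optimal configuration simultaneously, so the argument has to be woven together with the bias-plasticity result rather than proved in isolation. Handling this coupling rigorously, and ensuring that the fixed point is unique up to permutation of the $K$ mixture components so that the labelling of neurons to components is meaningful, is where most of the delicate work will lie.
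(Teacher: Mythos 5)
Your proposal is correct and, in its central step, actually takes a cleaner route than the paper. Both arguments start from $E[\Delta w_{ik}]=0$, both assume the whole layer sits self-consistently at the Bayes-optimal configuration so that $e^{u_k}=p_k$ and $e^{w_{0k}}=P(C_k)$ (hence $y_k$ is the true posterior), and both handle the norm with the same Oja-style computation $\frac{d}{dt}\lVert\boldsymbol{w}_k\rVert^2 \propto y_k u_k(1-\lVert\boldsymbol{w}_k\rVert^2)$, which the paper isolates as a separate theorem. Where you diverge is in evaluating $E[y_k\boldsymbol{x}]$: you use Bayes' rule to collapse $y_k(\boldsymbol{x})p(\boldsymbol{x})=p_k(\boldsymbol{x})P(C_k)$ exactly, giving $E[y_k\boldsymbol{x}]=P(C_k)\mu_{p_k}(\boldsymbol{x})$ in one line. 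The paper instead expands $p=\sum_l p_l P(C_l)$, argues that the cross terms $y_k p_l$ for $l\neq k$ are negligible because the component supports barely overlap, and then needs a further lemma (that the product of two distributions symmetric about a common mean retains that mean) to show $\int\boldsymbol{x}\,y_k p_k\,d\boldsymbol{x}\propto\mu_{p_k}(\boldsymbol{x})$ --- an approximation plus a symmetry assumption on the mixture that your identity renders unnecessary, since you never need the mean of $y_k p_k$ alone, only of $y_k p$. The price is the same in both cases: the step $e^{u_k}=p_k$ is a well-specifiedness assumption (KL-optimality within the exponential-cosine family does not by itself give equality), and neither you nor the paper establishes uniqueness of the fixed point or full directional stability --- the paper only verifies that the optimum is an equilibrium and that the norm is attracted to one, which is also the honest scope of your argument once your "linearization in the tangent space" is left as a sketch. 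Your closing caveat about the softmax coupling across neurons is exactly the issue the paper resolves by premising optimality for all $k$ simultaneously, so you have correctly identified where the delicate work lies.
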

The proof is provided in the Methods section.
Therefore, our update rule (Eq.\ \ref{eq:synplast}) optimizes the neuronal weights.

Moreover, the following normalization Theorem is proven in the Methods section.
\begin{thm}
	\label{thm:SoftHebbnorm}
	The equilibrium weights of the SoftHebb synaptic plasticity rule of Eq.\ \ref{eq:synplast} are implicitly normalized by the rule to a vector of length 1.
\end{thm}
This then constrains the convergence of SoftHebb to a unique solution. Moreover, it can be used as a proxy for measuring the progress of convergence (see \cref{sec:exp_cross} and \cref{fig:performance}D).

\stepcounter{subsection}
\subsection{Local learning of neuronal biases as Bayesian priors}
For the complete optimization of the model, the neuronal biases $w_{0k}$ must also be optimized to satisfy Eq.\ \ref{eq:G0}, i.e.\ to optimize the Bayesian prior belief for the probability distribution over the $K$ input causes.
For the biases, we define the following rate-based rule inspired from the spike-based bias rule of \citep{nessler2013PLoS}:

\begin{equation}
	\boxed{\Delta w_{0k}^{SoftHebb}=\eta e^{-w_{0k}}\left(y_k - e^{w_{0k}} \right).}
\end{equation}

With the same technique we used for Theorem \ref{thm:equilib}, we also provide proof in the Methods that the equilibrium of the bias with this rule matches the optimal value $\prescript{}{opt}{}w_{0k}=\ln P(C_k)$ of Theorem \ref{thm:optimal}:
\begin{thm}
	\label{thm:nesslerbias}
	The equilibrium biases of the SoftHebb bias learning rule are
	\begin{gather}
		w_{0k}^{SoftHebb}=\ln P(C_k)=\prescript{}{opt}{}w_{0k}.
	\end{gather}
\end{thm}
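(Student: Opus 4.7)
The plan is to mirror the equilibrium-analysis technique used for Theorem~\ref{thm:equilib}: impose $E[\Delta w_{0k}]=0$ under the data distribution $p(\boldsymbol{x})$, solve the resulting fixed-point equation, and then verify that the fixed point is stable and matches $\ln P(C_k)$.

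First I would take the expectation of the update over $p(\boldsymbol{x})$:
\begin{equation*}
E[\Delta w_{0k}] = \eta\, e^{-w_{0k}}\bigl(E[y_k] - e^{w_{0k}}\bigr).
\end{equation*}
Since $e^{-w_{0k}}>0$, setting this to zero reduces the equilibrium condition to $E[y_k] = e^{w_{0k}}$. The crucial observation is that $y_k$ is precisely the model's posterior $Q(C_k\mid\boldsymbol{x};\boldsymbol{w})$ produced by the softmax layer (Eq.~\ref{eq:Q_model}). Assuming that the synaptic weights $\boldsymbol{w}_k$ are already at their SoftHebb equilibrium from Theorem~\ref{thm:equilib}, the generative model $q$ matches the data-generating mixture $p$, so $Q(C_k\mid\boldsymbol{x})=P(C_k\mid\boldsymbol{x})$.

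Next I would apply the law of total probability:
\begin{equation*}
E[y_k] = \int p(\boldsymbol{x})\,P(C_k\mid\boldsymbol{x})\,d\boldsymbol{x} = P(C_k).
\end{equation*}
Substituting into the equilibrium condition gives $e^{w_{0k}} = P(C_k)$, hence $w_{0k}^{SoftHebb} = \ln P(C_k) = {}_{opt}w_{0k}$, which is the claim. As a sanity check, the normalization constraint $\sum_k e^{w_{0k}}=1$ from Definition~\ref{def:model} is automatically satisfied at this equilibrium because $\sum_k P(C_k)=1$.

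Finally I would confirm stability by examining $\partial E[\Delta w_{0k}]/\partial w_{0k}$ at the fixed point. Writing $b := e^{w_{0k}}$, the expected drift is proportional to $b^{-1}(E[y_k] - b)$; differentiating with respect to $w_{0k}$ (i.e., multiplying by $b$) and evaluating at $b = P(C_k)$ gives a strictly negative slope, so the equilibrium is locally stable. The main obstacle I anticipate is the circularity between the weight-equilibrium and bias-equilibrium analyses: $E[y_k]=P(C_k)$ requires the posterior to already be calibrated, which in turn depends on both $\boldsymbol{w}_k$ and $w_{0k}$. The cleanest resolution is to argue, as in \citep{nessler2013PLoS}, that the bias rule converges to match $\ln E[y_k]$ for any fixed weight configuration, and that once the weights reach their Theorem~\ref{thm:equilib} optimum this target coincides with $\ln P(C_k)$; a more rigorous treatment would analyze the joint dynamics, but the decoupled equilibrium argument suffices for the stated claim.
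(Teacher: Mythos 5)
Your proposal is correct and follows essentially the same route as the paper's proof: both reduce $E[\Delta w_{0k}]=0$ to $e^{w_{0k}}=E[y_k]$, identify $y_k$ with the posterior $Q(C_k\mid\boldsymbol{x})=P(C_k\mid\boldsymbol{x})$ under the assumption that the rest of the model is already at its optimum, and apply the law of total probability to conclude $E[y_k]=P(C_k)$. Your added stability check and your explicit flagging of the circularity (the paper likewise only verifies that the joint optimum is a fixed point, silently taking $Q=P$ as a premise) go slightly beyond what the paper writes down, but the core argument is identical.
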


\subsection{Alternate activation functions. Relation to Hard WTA}
\label{sec:activation_fn}
The model of Definition \ref{def:model} uses for each component $p(\boldsymbol{x}|C_k)$ an exponential probability distribution with a base of Euler's e, equivalent to a model using similarly exponential neurons (\cref{sec:neuro_exp}). Depending on the task, different probability distribution shapes, i.e.\ different neuronal activation functions, may be better models. This is compatible with our theory (see \cref{app:alternate_act}). Firstly, the base of the exponential activation function can be chosen differently, resulting in a softmax function with a different base, such that Eq.\ \ref{eq:Q_model} becomes more generally
\begin{equation}
Q^{(alt)}(C_k|\boldsymbol{x})=\frac{b^{u_k+w_{0k}}}{\sum_{l=1}^{K}b^{u_l+w_{0l}}}. \label{eq:Qb_model}
\end{equation}
This is equivalent to Temperature Scaling \citep{hinton2015distilling}, a mechanism that also maintains the probabilistic interpretation of the softmax output. The alternate-base version can also be implemented by a normalized layer of exponential neurons, which are compatible with our theoretical derivations and the optimization by the plasticity rule of Eq.\ \ref{eq:synplast}.
Interestingly, this integrates the hard WTA into the SoftHebb framework. Specifically, hard WTA is a special case of SoftHebb with an infinite base $b$ underlying the softmax, or equivalently a temperature of zero. Therefore, a Hebbian hard WTA, if used with the plasticity rule that we derived, is expected to show certain similarity to the soft WTA implementation. However, in the Experimental Results section we show that the soft version does have advantages. Namely, it leads to higher classification accuracy (\cref{fig:performance}a,b), it converges faster (\cref{fig:performance}b,c,d), and -- by allowing the network's interpretation as a Bayesian mixture model -- it enables SoftHebb's treatment as a generative model that can be sampled from and can generate synthetic objects \cref{fig:gan}.
This understanding and comparison is important because hard WTA is often chosen to underlie Hebbian learning \cite{amato2019hebbian, grinberg2019local, krotov2019unsupervised, lagani2021hebbian}.

Moreover, we show in the Methods section that other activation functions than exponential or softmax are also supported without a. soft WTA models can be constructed by rectified linear units (ReLU) or in general by neurons with any non-negative monotonically increasing activation function, and their weights are also optimized by the same plasticity rule.

\subsection{Cross-entropy minimization without supervision}
\paragraph*{SoftHebb as a discriminator.} Even though SoftHebb's soft WTA is a generative and not a discriminative model, i.e.\ it models the distribution $p(\boldsymbol{x})$ as $q(\boldsymbol{x};\boldsymbol{w})$, it can also be used for discrimination of the input classes $C_k$, i.e.\ classification, using Bayes' theorem:
\begin{equation}
Q(C_k|\boldsymbol{x};\boldsymbol{w})= \frac{q(\boldsymbol{x}|C_k;\boldsymbol{w})}{q(\boldsymbol{x};\boldsymbol{w})}.
\end{equation}
\paragraph*{SoftHebb minimizes cross-entropy of the true causes.}
It can be shown that while the generative model is optimized by SoftHebb, its discriminative aspect is also optimized. Specifically, the algorithm minimizes in expectation $H^{C}_Q\coloneqq H(P(C|\boldsymbol{x}), Q(C|\boldsymbol{x}))$,  i.e.\ the cross-entropy of the causes $Q(C_k|\boldsymbol{x})$ that it infers, from the true causes of the data $P(C_k|\boldsymbol{x})$:
\begin{equation}
	\boldsymbol{w}^{SoftHebb}= arg \min_{\boldsymbol{w}} H^{C}_Q.
	\label{eq:xent_c}
\end{equation}
That corollary follows from the proof of Theorem \ref{thm:optimal}. The Theorem's proof involved showing that SoftHebb minimizes the KL divergence $D_{KL}(p(\boldsymbol{x})||q(\boldsymbol{x};\boldsymbol{w}))$ of the model $q(\boldsymbol{x};\boldsymbol{w})$ from the data $p(\boldsymbol{x})$. But KL divergence is the difference of cross-entropy $H(p(\boldsymbol{x}), q(\boldsymbol{x};\boldsymbol{w}))$ minus the entropy $S_p$ of the data:
$D_{KL}(p(\boldsymbol{x}))||q(\boldsymbol{x};\boldsymbol{w}))=H(p(\boldsymbol{x}), q(\boldsymbol{x};\boldsymbol{w}))-S_p$.

 Therefore, since the entropy $S_p$ of the data distribution does not depend on the optimized model parameters, we conclude that minimizing KL divergence through SoftHebb implies also minimizing cross-entropy $H(p(\boldsymbol{x}), q(\boldsymbol{x};\boldsymbol{w}))$.

\paragraph*{Disparity between direct cause and label.}
\label{sec:cause_label}
So far we have shown that beyond the generative model, SoftHebb also optimizes the discriminative ability of the Bayesian model, but specifically pertaining to discriminating among the direct causes of the data.
However, ML practice is frequently interested in categorizations that may not correspond to the true causes that directly relate to the data.
In other words, in tasks of classification into a set of class labels, the label set -- let us denote this by $L$ -- chosen by a human supervisor may not correspond exactly to the true and single cause $C$ that generates the datapoints, which is what SoftHebb's unsupervised process learns to infer. This difference is depicted in \cref{fig:causal}(a).
Nevertheless, SoftHebb's process does minimize cross-entropy with respect to the labels too, as long as the label set is reasonable -- which we will now formalize.
For example, consider the commonly used benchmark dataset MNIST. The 10 labels indicating the 10 decimal digits do not correspond exactly to the true cause of each example image. In reality, the direct cause $C$ generating each MNIST example in the sense implied by causal inference is not the digit cause on its own, which corresponds to the MNIST label, but rather it is a combination of the digit $L$ with one of many handwriting styles $S$.
That is, the probabilistic model is such that
the direct cause $C$ of each sample is dual, i.e.\ there exists a digit $L_l\,(l\in \{0,1,..., 9\})$ and a style $S_s$ that jointly compose the direct cause (see also \cref{fig:causal}):
\begin{equation}
	\label{eq:causes0}P(C_k)\coloneqq P(C=C_k)=P(L_l) P(S_s).
\end{equation}

\begin{figure}[h]
	\centering
	\captionsetup[subfigure]{justification=centering}
	\begin{subfigure}[t]{0.3\textwidth}
		\begin{tikzpicture}
			\node[state] (x) {$\boldsymbol{x}$};
			\node (caption) [below =of x,yshift=1.5cm,xshift=-1.5cm] {(a)};
			\node[state, blue] (c) [above =of x] {$C$};
			\node[state, red] (l) [above left =of c] {$L$};
			\node[state] (s) [above right =of c] {$S$};
			\path [red] (l) edge (c);
			\path (s) edge (c);
			\path [blue] (c) edge (x);
		\end{tikzpicture}
		\label{fig:causal_a}
	\end{subfigure}
	\hspace{1cm}
	\begin{subfigure}[t]{0.3\textwidth}
		\begin{tikzpicture}
			\node (x) {Image};
			\node (caption) [below =of x,yshift=1.5cm,xshift=-2cm] {(b)};
			\node [blue] (c) [above =of x,yshift=-0.2cm,text width=4cm, align=center] {True direct cause \\ (Inferred by SoftHebb)};
			\node [red] (l) [above left =of c,yshift=-0.3cm,xshift=3.4cm,text width=3cm, align=center] {MNIST label\\(Digit 0-9)};
			\node (s) [above right =of c,yshift=-0.3cm,xshift=-3.4cm,text width=3cm, align=center] {Handwriting\\style};
			\path [red] (l) edge (c);
			\path (s) edge (c);
			\path [blue] (c) edge (x);
		\end{tikzpicture}
		\label{fig:causal_b}
	\end{subfigure}
	\caption{(a) A causal graph where the single direct hidden cause $C$ that generates the observed data $\boldsymbol{x}$ is itself affected by two root causes $L$ and $S$. (b) The same graph annotated, with root causes as they correspond to the MNIST dataset.}
	\label{fig:causal}
\end{figure}
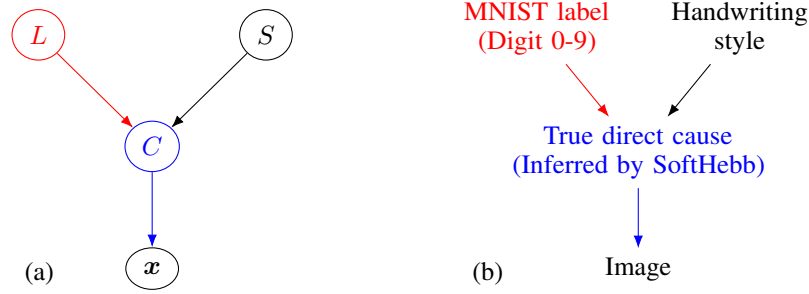

\paragraph*{SoftHebb minimizes cross-entropy of the labels.}
This relationship between the labels $L$ and the direct causes $C$ (see \cref{fig:causal}(a), red arrow) can be written as
\begin{equation}
	\label{eq:causes}
	P(L_l)=\sum_k P(L_l|C_k) P(C_k).
\end{equation}

Therefore, SoftHebb's $Q(C)$ implicitly defines a model $Q(L)$:
\begin{equation}
	Q(L_l)\coloneqq\sum_{k} P(L_l|C_k)Q(C_k). \label{eq:causesQ}
\end{equation}

In Eqs. \ref{eq:causes} and \ref{eq:causesQ}, the term $P(L_l|C_k)$ is fixed by the data-generation process.
As a consequence, to minimize cross-entropy between $P(L)$ and $Q(L)$ is to minimize cross-entropy between $P(C)$ and $Q(C)$, which SoftHebb does (see \cref{eq:xent_c}).

Therefore, SoftHebb minimizes cross-entropy $H^{L}_Q$ between its implicit model of the labels and the true labels $L$:
\begin{equation}
	\boxed{\text{\textbf{Cross-entropy minimization: }}\boldsymbol{w}^{SoftHebb}= arg \min_{\boldsymbol{w}} H^{C}_Q= arg \min_{\boldsymbol{w}} H^{L}_Q.} \label{eq:argminHQ}
\end{equation}
This is remarkable, given that SoftHebb never accesses the labels or any other supervisory signal.

\paragraph*{Measuring loss minimization: Post-hoc cross-entropy}
\label{sec:causes}
It has not been obvious how to correctly measure the loss of an unsupervised WTA network during the learning process, since the ground truth for \textit{causes} $C$ is often not available, even if \textit{labels} $L$ are available, as we reasoned above. Using our theoretical result about cross-entropy minimization, here we propose a method for measuring the loss minimization during training in spite of this cause-label mismatch. To obtain $Q(L_l|\boldsymbol{x})$ of \cref{eq:causesQ} and measure the cross-entropy, the causal structure $P(L_l|C)$ (see \cref{fig:causal}, red) is missing, but it can be represented by a supervised classifier $Q_2(L_l|Q(C|\boldsymbol{x}))$ of SoftHebb's outputs, trained using the labels $L_l$.
Therefore, by (a) unsupervised training of SoftHebb, then (b) training a supervised classifier on top, and finally (c1) repeating the training of SoftHebb with the same initial weights and ordering of the training inputs as in step (a), while (c2) measuring the trained classifier's loss, we can observe the cross-entropy loss $H^{L}$ of SoftHebb while it is being minimized, and infer that $H^{C}$ is also minimized (Eq.\ \ref{eq:argminHQ}). We call this the \textit{post-hoc cross-entropy} method, and it enables evaluation of the learning process in a theoretically sound manner during experimentation (see \cref{sec:exp_cross} and \cref{fig:performance}C).

\section{Experimental Results}
We implemented the theoretical SoftHebb model in simulations and tested it in the task of learning to classify MNIST handwritten digits. The network received the MNIST frames normalized by their Euclidean norm, while the plasticity rule that we derived updated its weights and biases in an unsupervised manner. We used $K=2000$ neurons. First we trained the network for 100 epochs, i.e.\ randomly ordered presentations of the 60000 training digits.
Each training experiment was repeated five times with varying random initializations and input order. We report the mean and standard deviation of the resulting accuracies.
Inference of the input labels by the WTA network of 2000 neurons was performed in two different ways. The first approach is single-layer, where, after training the network, we assigned a label to each of the 2000 neurons, in a standard approach that is used in unsupervised clustering. Namely, for each neuron, we found the label of the training set that makes it win the WTA competition most often. In this single-layer approach, this is the only time when labels were used, and at no point were weights updated using labels.
The second approach was two-layer and based on supervised training of a perceptron, i.e.\ linear classifier on top of the WTA layer. The classifier layer was trained with the Adam optimizer \citep{kingma2015adam} and cross-entropy loss for 100 epochs, while the previously-trained WTA parameters were frozen.

\subsection{Indicative accuracies in standard setting}
\label{sec:vshardWTA}
\begin{figure*}[ht]
	\centering
	\includegraphics[width = 140mm]{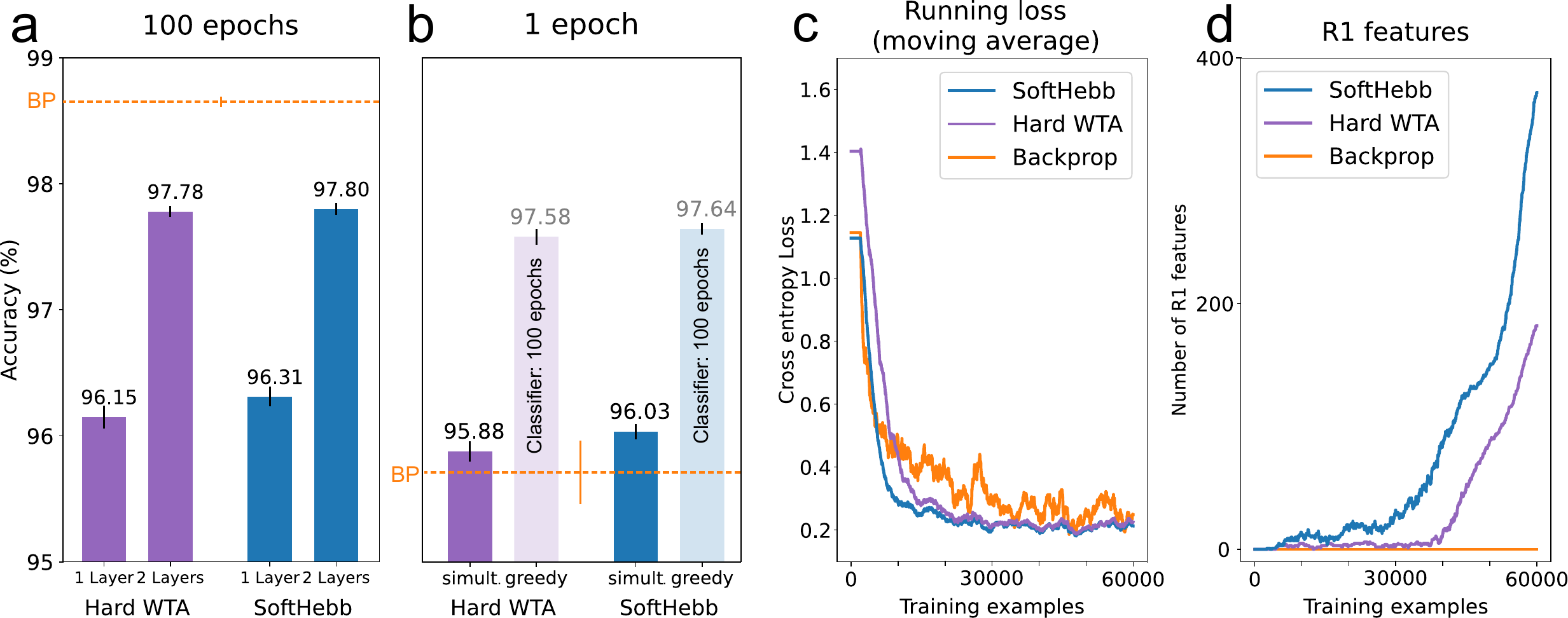}
	\caption{Performance of SoftHebb on MNIST compared to hard WTA and backpropagation. (a) SoftHebb with a finite softmax base slightly outperforms its hard-WTA special case, especially in the single-layer evaluation. (b) SoftHebb learns fast, performing almost as well in the first training epoch as later, and even outperforms end-to-end backpropagation (BP, horizontal line). (c) SoftHebb minimizes the post-hoc cross-entropy loss, as predicted by the theoretical results. In addition, SoftHebb minimizes it faster than its hard-WTA version, and faster than supervised backpropagation of the loss. (d) SoftHebb learns weight vectors that converge to a sphere of radius 1 (``R1 features''). The soft version is faster to converge also under this metric that evaluates the learned representation itself.}
	\label{fig:performance}
\end{figure*}

SoftHebb achieved an accuracy of $(96.31\pm0.06)\%$ and $(97.80\pm0.02)\%$ in its 1- and 2-layer form respectively.
To test the strengths of the soft-WTA approach combined with training the priors through biases, which makes the network Bayesian, we also trained the weights of a hard-WTA network, i.e.\ a model equivalent to SoftHebb with an infinite base in the softmax.
The SoftHebb model slightly outperformed the special case of hard WTA (Fig. \ref{fig:performance}a), especially in the 1-layer case where the supervised 2nd layer cannot compensate for the drop in accuracy. Speed comparisons reveal that this is due to a faster convergence by SoftHebb in terms of learning examples (see \cref{sec:exp_cross} and \cref{fig:performance}d).
As an indicative baseline, we also trained a multi-layer perceptron (MLP) with one hidden layer of also 2000 neurons, exhaustively with end-to-end backpropagation and tested it. This was expected to perform significantly better and indeed it reached an accuracy of $(98.65\pm0.06)\%$ (\cref{fig:performance}a, horizontal dashed line). This is not surprising, due to end-to-end training, supervision, and the MLP being a discriminative model as opposed to a generative model merely applied to a classification task, as SoftHebb is. If the Bayesian and generative aspects that follow from our theory were not required, several mechanisms exist to enhance the discriminative power of WTA networks \citep{krotov2019unsupervised}, and even an untrained, random projection layer in place of a trained WTA performs well \citep{illing2019biologically}. Our approach however does have surprising advantages even in a discriminative task, and we report these in the next sections.

\subsection{Speed advantages of SoftHebb. Cross-entropy minimization}
\label{sec:exp_cross}
\paragraph*{Learning speed: SoftHebb outperforms hard WTA and backpropagation in the first epoch.}
Next, we evaluated SoftHebb's data-efficiency and speed by comparing it to other models during the first training epoch.
In the common, ``greedy'' training of such networks, layer L+1 is trained only after layer L is trained on the full dataset and its weights frozen. We trained in this manner a second layer as a supervised classifier for 100 epochs after a single unsupervised learning epoch in the first layer. SoftHebb with a non-zero temperature again slightly outperformed its Hard WTA version, showing it extracts superior features (Fig. \ref{fig:performance}b, light-coloured bars).
More importantly, we tested a truly single-epoch scenario, without longer training for the second layer. SoftHebb further outperforms Hard WTA (Fig. \ref{fig:performance}b, dark coloured bars). Strikingly, it even outperforms end-to-end (e2e) BP in 1-epoch accuracy. For the 1-epoch experiments, SoftHebb was trained in the fully on-line setting, where each iteration includes a single training example, i.e.\ a ``mini-batch'' of size 1, whereas for backpropagation we tuned the batch size and learning rate to the single-epoch setting and found that stochastic gradient descent with a batch size of 4 was best (see \ref{sec:hyperparams}), which we used.

\paragraph*{SoftHebb minimizes cross-entropy, and faster than backpropagation.}
To empirically confirm the theoretical result about unsupervised minimization of cross-entropy by SoftHebb, we measured the post-hoc cross-entropy loss as derived in \cref{sec:causes}.
The resulting training curve is depicted in \cref{fig:performance}c, for a representative example of a training run. The first observation is that the experiment confirms our theoretical prediction that SoftHebb minimizes the cross-entropy between the model and the labels, which is remarkable, considering that the labels were hidden from the model.
In comparison to the baselines, SoftHebb is faster to converge than its hard-WTA special case and than backpropagation, as measured by number of training iterations/examples. All three algorithms were trained in the fully on-line setting. That is consistent with the accuracy results for the first epoch (\cref{sec:vshardWTA}). This specific manifestation of SoftHebb's speed advantage is particularly interesting, as it concerns a loss function that backprop minimizes explicitly and has access to it. One might have intuitively expected then that backprop's supervised minimization would be faster. SoftHebb's speed advantage might be attributed to its Bayesian nature, which takes account of each datapoint of evidence optimally.

\subsection{SoftHebb improves representation learning over hard WTA}
As a further insight into the differences between SoftHebb and hard-WTA learning, we measured throughout learning the number of learned features that lie on a hypersphere with a radius of $1\pm0.01$ (R1 features), according to their Euclidean norm. The SoftHebb learning algorithm converges to such a normalization in theory
(end of Appendix \ref{sec:proofs}, Theorem \ref{thm:SoftHebbnorm}), and Fig. \ref{fig:performance}D validates that it does, but also that it does so faster than its hard-WTA special case. This demonstrates SoftHebb's superiority in unsupervised representation learning and speed, from the perspective of its convergence to the optimal generative Bayesian model, rather than from its discriminative ability.

\subsection{Update unlocking}
In the simultaneous experiments, the Hebbian networks has an additional important advantage. By using the delta rule for the 2nd layer, each individual training example updates both layers. In contrast to end-to-end (e2e) backpropagation, this simultaneous method does not suffer from the update-locking problem \citep{czarnecki2017understanding, frenkel2021learning}, i.e.\ the first layer can learn from the next example before the current input is even processed by the higher layer, let alone backpropagated. This is a consequence of the locality of the plasticity, and solves an important inefficiency and implausibility of backpropagation and most of its approximations.

Hebbian learning compared to backpropagation has not generally been considered superior for its accuracy, but for other potential benefits. Here we show that, for small problems demanding fast learning, SoftHebb may be superior to backpropagation even in terms of accuracy, in addition to its biological plausibility and efficiency.

\subsection{Robustness to noise and adversarial attacks}
\label{sec:robustness}
\paragraph*{Robustness comparison with backpropagation.}
\begin{figure*}[ht]
	\centering
	\includegraphics[width = 140mm]{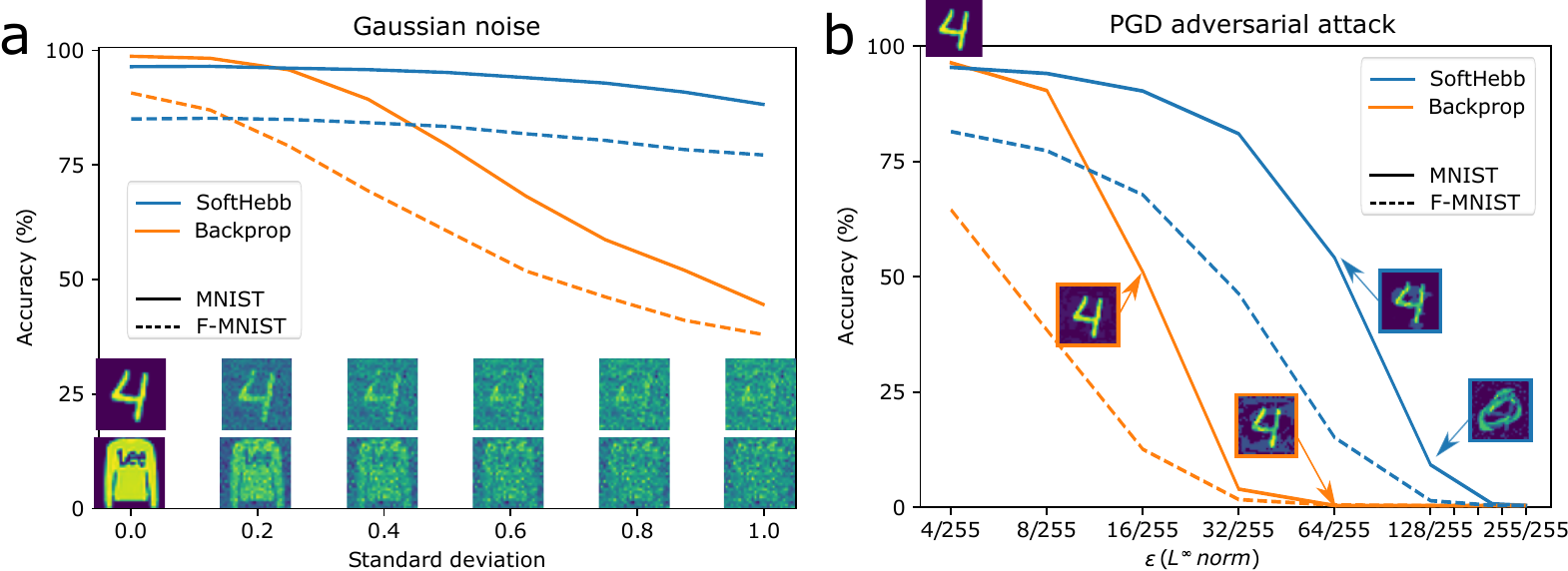}
	\caption{Noise and adversarial attack robustness of SoftHebb and of backpropagation-trained MLP on MNIST and Fashion-MNIST. The insets show one example from the testing set and its perturbed versions, for increasing perturbations. (a) SoftHebb is highly robust to noise, even in very noisy settings, in contrast to backprop. (b) MLP's MNIST accuracy drops to \char`~50\% by hardly perceptible perturbations ($\epsilon=16/255$), while SoftHebb requires visually noticeable perturbations ($\epsilon=64/255$) for similar drop in performance. At that degree of perturbation, the MLP has already dropped to zero. SoftHebb deflects the attack: it forces the attacker to produce examples of truly different classes - the original digit ``4'' is perturbed to look like a ``0'' (see also Fig. \ref{fig:gan}). The attack of the backprop-trained network does not confuse a human observer even at $\epsilon=64/255$.}
	\label{fig:robustness}
\end{figure*}
Based on the Bayesian, generative, and purely input-driven learning nature of the algorithm (as opposed to driven by top-down signals), we hypothesized that SoftHebb may be more robust to input perturbations. Indeed, we tested the trained SoftHebb and MLP models for robustness, and found that SoftHebb is significantly more robust than the backprop-trained MLP, both to added Gaussian noise and to projected gradient descent (PGD) adversarial attacks (see Fig. \ref{fig:robustness}). PGD \citep{madry2017towards} produces perturbations in a direction that ascends the loss of each targeted network, and in size controlled by a parameter $\epsilon$. It is a white-box attack, that has access to the values of the weights, and is considered one of the strongest adversarial attacks.
Strikingly, the Hebbian WTA model has a visible tendency to deflect the attacks, i.e.\ its most confusing examples actually belong to a perceptually different class (Fig. \ref{fig:robustness}B and \cref{fig:gan}). This effectively nullifies the attack and was previously shown in elaborate SOTA adversarial-defence models \citep{qin2020deflecting}. The attack's parameters were tuned systematically (see \cref{app:adversarial}). 

\paragraph*{Robustness comparison with K-means and PCA.}
\begin{figure}[h]
	\centering
	\includegraphics[width = 140mm]{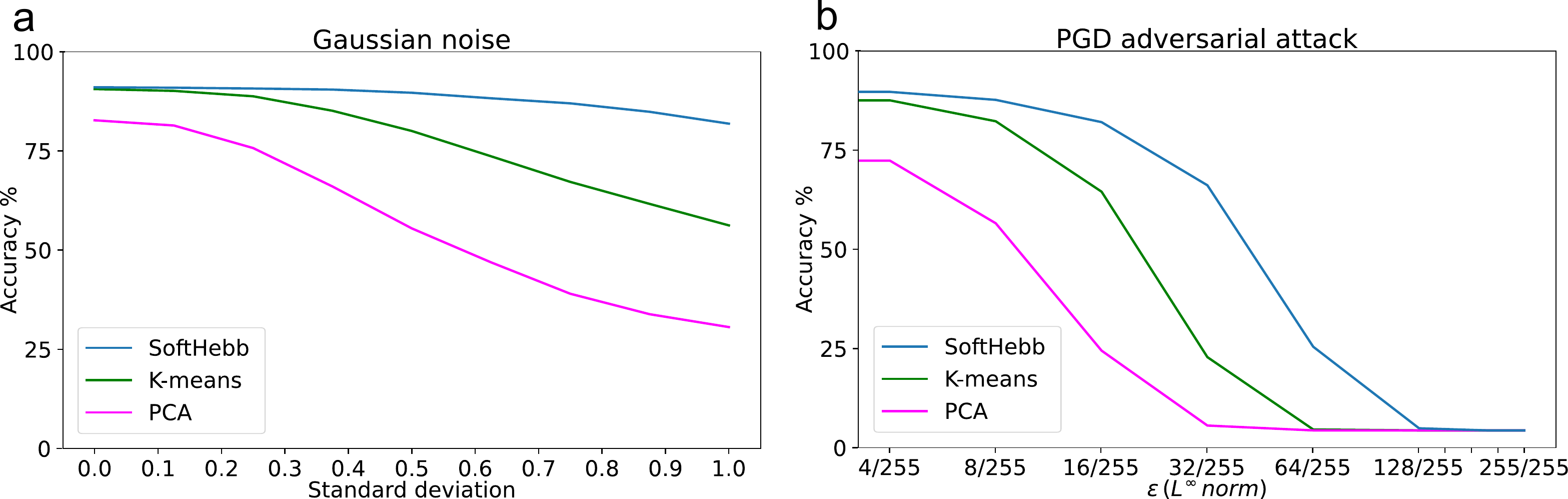}
	\caption{Unsupervised algorithms: Noise and adversarial attack robustness of SoftHebb, K-means, and PCA. SoftHebb is the most robust.}
	\label{fig:kmeans}
\end{figure}
It is possible that the observed robustness is not unique to SoftHebb and can be reproduced by other unsupervised learning rules. To test this possibility, we compared SoftHebb with principal component analysis (PCA) and K-means. We used 100 neurons, principal components, or centroids respectively. In PCA and K-means we then treated the learned coefficients as weight vectors of neurons and applied an activation function, i.e.\ non-linearity, to then train a supervised classifier on top. First we attempted softmax as in SoftHebb. However the unperturbed test accuracy achieved at convergence was much lower. For example, on MNIST, K-means only reached an accuracy of $53.64\%$ and PCA $28.55\%$, whereas SoftHebb reached $91.06\%$. Therefore, we performed the experiment again, but with ReLU activation for K-means and PCA, reaching  $90.61\%$ and $82.74\%$ respectively. Then we tested for robustness, revealing that SoftHebb's learned features are in fact more robust than those of the other two unsupervised algorithms (Fig. \ref{fig:kmeans}).
For K-means, the centroids were initialized at positions equal to K, i.e.\ 100 randomly sampled datapoints from the MNIST training set. We also experimented with random initialization from a uniform distribution, which did not produce a significantly different behaviour.

\paragraph*{Effect of softmax on adversarial robustness.}
It is possible that the observed robustness of SoftHebb is due to the use of softmax as an activation function. To test this, we compared the SoftHebb network from Fig. \ref{fig:robustness} with a same-size backprop-trained 2-layer network, but in this case the hidden layer's summed weighted input was passed through a softmax instead of ReLU before forwarding to the 2nd layer. First, we observed that at convergence, the backprop-trained network did not achieve SoftHebb's accuracy on either MNIST ($94.38\%$) or Fashion-MNIST ($75.90\%$). Increasing the training time to 300 epochs did not help. Second, as can be seen in \ref{fig:softmax_attack}, backpropagation remains significantly less robust than SoftHebb to the input perturbations. This, together with the previous control experiments, suggests that, rather than its activation function, it is SoftHebb's learned representations that are responsible for the network's robustness.

\begin{figure}[h]
	\centering
	\includegraphics[width = 140mm]{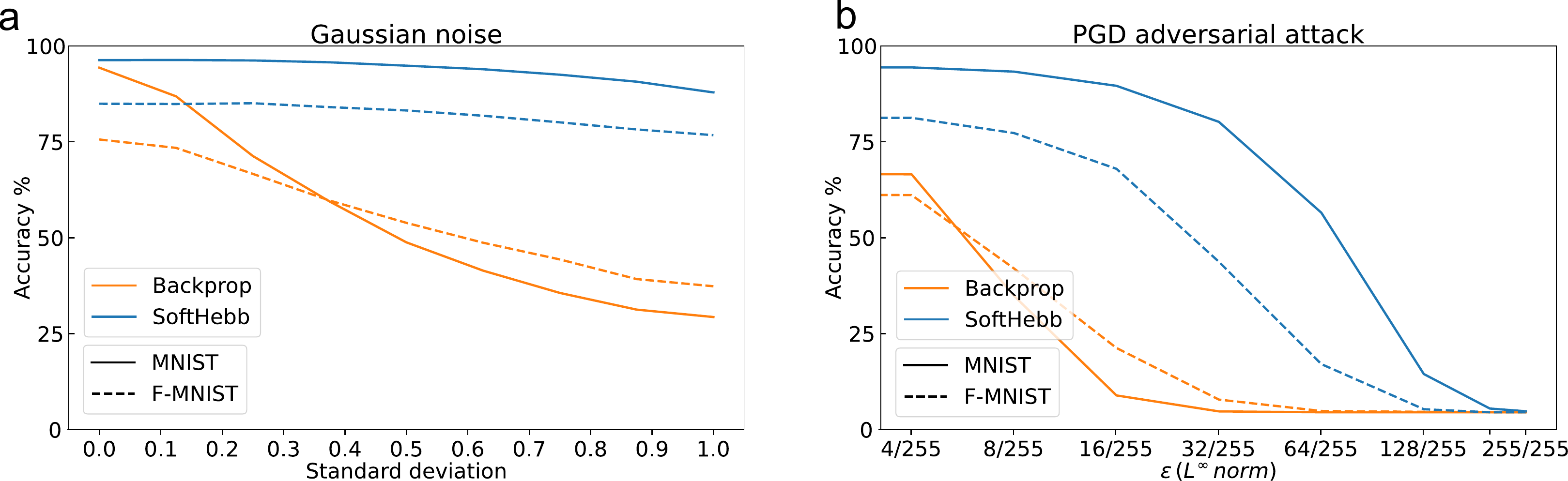}
	\caption{Softmax-based networks: Noise and adversarial attack robustness of SoftHebb and of backpropagation-trained softmax-MLP on MNIST and Fashion-MNIST. Both SoftHebb and the MLP use a softmax activation at the hidden layer. Backpropagation remains less robust than SoftHebb.}
	\label{fig:softmax_attack}
\end{figure}
\subsection{SoftHebb's generative adversarial properties}
\begin{figure*}[h]
	\centering
	\includegraphics[width = 135mm]{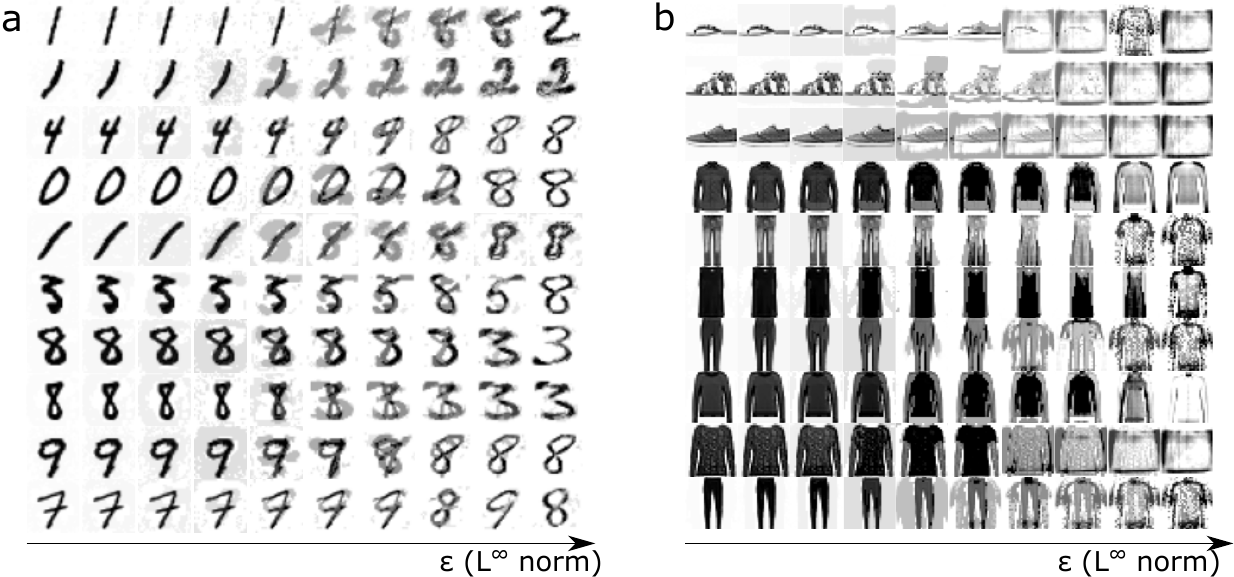}
	\caption{Synthetic objects generated by the adversarial pair PGD attacker/SoftHebb model for (a) MNIST and (b) F-MNIST. SoftHebb's inherent tendency to deflect the attack is visible, i.e.\ the strongest perturbations truly belong to different classes. Generation of synthetic objects that are interpolations between different classes of the true data distribution can also be seen. This generative property was previously unknown for such simple networks.}
	\label{fig:gan}
\end{figure*}
The pair of the adversarial attacker with the generative SoftHebb model essentially composes a generative adversarial network (GAN), even though the term is usually reserved for pairs \textit{trained }in tandem \citep{goodfellow2014generative, creswell2018generative}. As a result, the model could inherit certain properties of GANs. It can be seen that it is able to generate interpolations between input classes (Fig. \ref{fig:gan}). The parameter $\epsilon$ of the adversarial attack can control the balance between the interpolated objects. Similar functionality has existed in the realm of GANs \citep{radford2015unsupervised}, autoencoders \citep{berthelot2018understanding}, and other deep neural networks \citep{bojanowski2017optimizing}, but was not known for simple biologically-plausible models.

\subsection{Extensibility of SoftHebb: F-MNIST, CIFAR-10}
Finally, we performed preliminary tests on two more difficult datasets, namely Fashion-MNIST \citep{xiao2017/online}, which contains grey-scale images of clothing products, and CIFAR-10 \citep{krizhevsky2009learning}, which contains RGB images of animals and vehicles. We did not tune the Hebbian networks' hyper-parameters extensively, so accuracies on these tasks are not definitive but do give a good indication. Future experiments could use for example a recent Bayesian hyperparameter-optimization scheme \citep{cowen2022hebo}. On F-MNIST, the SoftHebb model achieved a top accuracy of $87.46\%$ whereas a hard WTA reached a similar accuracy of $87.49\%$. A supervised MLP of the same size achieved a test accuracy of $90.55\%$.
SoftHebb's generative interpolations (Fig. \ref{fig:gan}B) are reconfirmed on the F-MNIST dataset, as is its robustness to attacks, whereas, with very small adversarial perturbations, the MLP drops to an accuracy lower than the SoftHebb model (dashed lines in Fig. \ref{fig:robustness}). 	
On CIFAR-10's preliminary results, the hard WTA and SoftHebb achieved an accuracy of $49.78\%$ and $50.27\%$ respectively. In every tested dataset, it became clear that SoftHebb learns in fewer iterations than either backpropagation or a hard WTA, by observing the loss and the learned features as in Fig. \ref{fig:performance}c \& d.
The fully-connected SoftHebb layer is applicable to MNIST because the data classes are well-clustered directly in the feature-space of pixels. That is, SoftHebb's probabilistic model's assumptions (Def. \ref{def:data}) are quite valid for this feature space, and increasing the number of neurons for discovering more refined sub-clusters does help. However, for more complex datasets, this approach alone has diminishing returns and multilayer networks will be needed.
Towards this, we believe that a convolutional version of SoftHebb will be a key aspect to enable a distributed feature-representation despite the concentrated activation of WTA networks.
However, we leave multilayer networks and further experiments for future work \citep{journe2022hebbian}, and in the present article we focus on the theoretical foundation and properties of the individual SoftHebb layer.

\section{Methods}
\subsection{Proofs of theoretical results}
\label{sec:proofs}

\begin{proof}[\textbf{Proof of Theorem \ref{thm:optimal}}]
	A shorter version of this Theorem's proof was provided in the supplementary material of \citet{moraitis2020shortterm}, but for completeness we also provide it here.
	The parameters of model $q$ are optimal $\boldsymbol{w}=\prescript{}{opt}{}\boldsymbol{w}$ if they   minimize the model's Kullback-Leibler divergence with the data distribution $p$. $D_{KL}(p(\boldsymbol{x})||q(\boldsymbol{x};\boldsymbol{w}))$.
	Because $p_k\coloneqq p(\boldsymbol{x}|C_k)$ is independent from $p_l$, and $q_k\coloneqq q(\boldsymbol{x}|C_k; \boldsymbol{w}_k)$ is independent from $ \boldsymbol{w}_l$ for every $l\neq k$,
	we can find the set of parameters that minimize the KL divergence of the mixtures, by minimizing the KL divergence of each component $k$:
	$\min D_{KL}(p_k||q_k),\, \forall k,
	$ and simultaneously setting
	\begin{equation}
		P(C_k)=Q(C_k;w_{0k}), \, \forall k.
		\label{eq:priorcondition}
	\end{equation}		
	From Eq.\ \ref{eq:g0_param} and this last condition, Eq.\ \ref{eq:G0} of the Theorem is proven: \begin{equation}
		\prescript{}{opt}{}w_{0k}=\ln P(C_k). \nonumber
	\end{equation}
	
	Further,
	\begin{gather}
		\prescript{}{opt}{}\boldsymbol{w}_k  \coloneqq \arg \min_{ \boldsymbol{w}_k } D_{KL}(p_k||q_k)\nonumber\\
		=\arg \min_{ \boldsymbol{w}_k }\int_{\boldsymbol{x}}  p_k  \ln \frac{p_k }{q_k }d\boldsymbol{x}\nonumber\\
		=\arg \min_{ \boldsymbol{w}_k }\int_{\boldsymbol{x}}  p_k  \ln p_k  - p_k  \ln q_k  d\boldsymbol{x}\nonumber\\
		=\arg \min_{ \boldsymbol{w}_k }\int_{\boldsymbol{x}}  - p_k  \ln q_k  d\boldsymbol{x} \label{eq:entropy_pt2}\\
		=\arg \max_{ \boldsymbol{w}_k }\int_{\boldsymbol{x}}  p_k  \ln q_k  d\boldsymbol{x}\label{eq:ln_q}\nonumber \\
		=\arg \max_{ \boldsymbol{w}_k }\int_{\boldsymbol{x}}  p_k  \ln e^{u_k}  d\boldsymbol{x}  \label{eq:cosine} \\
		=\arg \max_{ \boldsymbol{w}_k }\int_{\boldsymbol{x}}  p_k  u_k  d\boldsymbol{x} \nonumber\\
		=\arg \max_{ \boldsymbol{w}_k } \mu_{p_k}\left(u_k\right) \nonumber\\
		=\arg \max_{ \boldsymbol{w}_k } \mu_{p_k}\left(\cos\left( \boldsymbol{w}_k, \boldsymbol{x}\right)\right).
		\label{eq:argmin}
	\end{gather}
	where we used for Eq.\ \ref{eq:entropy_pt2} the fact that $\int_{\boldsymbol{x}}  p_k  \ln p_k d\boldsymbol{x}$ is a constant because it is determined by the environment's data and not by the model's parametrization on $\boldsymbol{w}$.
	Eq.\ \ref{eq:cosine} follows from the definition of $q_k$. The result in Eq.\ \ref{eq:argmin} is the mean value of the cosine similarity $u_k$.
	
	Due to the symmetry of the cosine similarity, and as we prove formally at the end of this Theorem's Proof, it follows that
	\begin{gather}
		\prescript{}{opt}{}\boldsymbol{w}_k =\arg \max_{ \boldsymbol{w}_k } \mu_{p_k}\left(\cos\left( \boldsymbol{w}_k, \boldsymbol{x}\right)\right)=\arg \max_{ \boldsymbol{w}_k } \cos\left( \boldsymbol{w}_k, \mu_{p_k}\left(\boldsymbol{x}\right)\right)\label{eq:symmetry}
		\\=c\cdot \mu_{p_k}\left(\boldsymbol{x}\right), c\in\mathbb{R}^+.
		\label{eq:optweights}
	\end{gather}

	Enforcement of the requirement for normalization of the vector leads to the unique solution \\$ \prescript{}{opt}{}\boldsymbol{w}^*_k=\frac{\mu_{p_k}\left(\boldsymbol{x}\right)}{||\mu_{p_k}\left(\boldsymbol{x}\right)||}$, which proves the Theorem.

\textbf{Regarding Eq.\ \ref{eq:symmetry}:} It may not be obvious how Eq.\ \ref{eq:symmetry} follows from the symmetry of the cosine similarity, therefore in the remaining parts of this Theorem's Proof we prove it formally.

We define $\boldsymbol{w}_k^{(0)}$ as
\begin{equation}
\boldsymbol{w}_k^{(0)}\coloneqq \arg \max_{ \boldsymbol{w}_k } \cos\left( \boldsymbol{w}_k, \mu_{p_k}\left(\boldsymbol{x}\right)\right)=c\mu_{p_k}\left(\boldsymbol{x}\right), c\in\mathbb{R}^+. \label{eq:def_wk0}
\end{equation}

Equivalently to Eq.\ \ref{eq:symmetry}, we must prove that $\arg \max_{ \boldsymbol{w}_k } \mu_{p_k}\left(\cos\left( \boldsymbol{w}_k, \boldsymbol{x}\right)\right)=\boldsymbol{w}_k^{(0)}$, i.e.\ that $\max_{ \boldsymbol{w}_k } \mu_{p_k}\left(\cos\left( \boldsymbol{w}_k, \boldsymbol{x}\right)\right)=\mu_{p_k}\left(\cos\left( \boldsymbol{w}_k^{(0)}, \boldsymbol{x}\right)\right)$.

For this, we will show that, if the cosine is measured with respect to a weight vector shifted by some $\delta \boldsymbol{x}$ instead of with respect to $\boldsymbol{w}_k^{(0)}$, then the mean value decreases, i.e.\ that
\begin{gather}
\mu_{p_k}\left(\cos\left( \boldsymbol{w}_k^{(0)}, \boldsymbol{x}\right)\right)\geq\mu_{p_k}\left(\cos\left( \boldsymbol{w}_k^{(0)}+\delta \boldsymbol{x}, \boldsymbol{x}\right)\right),\quad \forall \delta \boldsymbol{x} \\
\iff \mu_{p_k}\left(\cos\left( \boldsymbol{w}_k^{(0)}, \boldsymbol{x}\right)\right)-\mu_{p_k}\left(\cos\left( \boldsymbol{w}_k^{(0)}+\delta \boldsymbol{x}, \boldsymbol{x}\right)\right)\geq 0,\quad \forall \delta \boldsymbol{x}. \label{eq:deltamean}
\end{gather}

Let 
$\cos_0\left(\boldsymbol{x}\right)\coloneqq \cos\left(\boldsymbol{w}_k^{(0)}, \boldsymbol{x}\right)$ and the shifted cosine function $\cos_1\left(\boldsymbol{x}\right) \coloneqq \cos\left(\boldsymbol{w}_k^{(0)}+\delta \boldsymbol{x}, \boldsymbol{x}\right)$.

Then, equivalently to inequality \ref{eq:deltamean}, we must show that
\begin{gather}
\mu_{p_k}\left(\cos_0\left(\boldsymbol{x}\right)\right)-\mu_{p_k}\left(\cos_1\left(\boldsymbol{x}\right)\right) \geq 0\\
\iff\int_{-\infty}^{\infty} \cos_0\left(\boldsymbol{x}\right) p_k \left(\boldsymbol{x}\right) d\boldsymbol{x} - \int_{-\infty}^{\infty} \cos_1\left(\boldsymbol{x}\right) p_k \left(\boldsymbol{x}\right) d\boldsymbol{x} \geq 0\\
\iff \int_{-\infty}^{\infty} \Delta\text{cos}\left(\boldsymbol{x}\right) p_k \left(\boldsymbol{x}\right) d\boldsymbol{x} \geq 0\\
\iff\int_{-\infty}^{{\mu_{p_k}\left(\boldsymbol{x}\right)+\frac{\delta \boldsymbol{x}}{2}}} \left(\Delta\text{cos}\circ p_k\right) \left(\boldsymbol{x}\right) d\boldsymbol{x}
+ \int_{{\mu_{p_k}\left(\boldsymbol{x}\right)+\frac{\delta \boldsymbol{x}}{2}}}^{\infty} \left(\Delta\text{cos}\circ p_k\right) \left(\boldsymbol{x}\right) d\boldsymbol{x} \geq 0\\
\iff\int_{-\infty}^{\boldsymbol{\nu}} \left(\Delta\text{cos}\circ p_k\right) \left(\boldsymbol{x}\right) d\boldsymbol{x}
- \int_{\boldsymbol{\nu}}^{\infty} \left(-\Delta\text{cos}\circ p_k\right) \left(\boldsymbol{x}\right) d\boldsymbol{x} \geq 0\\
\iff\int_{-\infty}^{\boldsymbol{\nu}} \left(\Delta\text{cos}\circ p_k\right) \left(\boldsymbol{x}\right) d\boldsymbol{x}
\geq \int_{\boldsymbol{\nu}}^{\infty} \left(-\Delta\text{cos}\circ p_k\right) \left(\boldsymbol{x}\right) d\boldsymbol{x}, \label{eq:ineq}
\end{gather}
where we have defined $\Delta\text{cos} \coloneqq \left(\cos_0-\cos_1\right)\left(\boldsymbol{x}\right)$, and $\boldsymbol{\nu}\coloneqq {\mu_{p_k}\left(\boldsymbol{x}\right)+\frac{\delta \boldsymbol{x}}{2}}$.

To show inequality \ref{eq:ineq}, it suffices to show that
\begin{equation}
	\left\{
	\begin{aligned}
p_k\left(\boldsymbol{\nu-\eta}\right) &\geq p_k\left(\boldsymbol{\nu+\eta}\right)
\\	\text{and } \Delta\text{cos}\left(\boldsymbol{\nu-\eta}\right)&=-\Delta\text{cos} \left(\boldsymbol{\nu+\eta} \right),
\end{aligned}\right.
	\end{equation}
$\forall \boldsymbol{\eta}: \delta \boldsymbol{x}\cdot \boldsymbol{\eta}\geq0$.

The probability density function $p_k \left(\boldsymbol{x}\right)$ is symmetrically decreasing, i.e.\ it is decreasing in the directions that point away from its mean $\mu_{p_k}\left(\boldsymbol{x}\right)$, with respect to any reference point. Therefore, taking as a reference the point $\boldsymbol{\nu}$, it is indeed true that
\begin{equation}p_k \left(\boldsymbol{\nu-\eta}\right) \geq p_k \left(\boldsymbol{\nu+\eta} \right), \quad \forall \boldsymbol{\eta}: \delta \boldsymbol{x}\cdot \boldsymbol{\eta}\geq0.
\end{equation}

We will now complete the proof by also showing the second equality that we seek, i.e.\ $\Delta\text{cos}\left(\boldsymbol{\nu-\eta}\right)=-\Delta\text{cos} \left(\boldsymbol{\nu+\eta} \right)$.

\begin{gather}
	 \Delta\text{cos}\left(\boldsymbol{\nu-\eta}\right) \nonumber
	 \\=\cos_0 \left(\boldsymbol{\nu-\eta}\right) - \cos_1 \left(\boldsymbol{\nu-\eta}\right)
     \\=\cos_0 \left(\mu_{p_k}\left(\boldsymbol{x}\right)+\frac{\delta \boldsymbol{x}}{2}-\boldsymbol{\eta}\right) - \cos_1 \left(\mu_{p_k}\left(\boldsymbol{x}\right)+\frac{\delta \boldsymbol{x}}{2}-\boldsymbol{\eta}\right).
\end{gather}

From the definition of $\boldsymbol{w}_k^{(0)}$, it is $\boldsymbol{w}_k^{(0)}=c\mu_{p_k}\left(\boldsymbol{x}\right), c\in\mathbb{R}^+$.
In addition, based on the definitions of $\cos_0$ and $\cos_1$, it is $\cos_0 \left(\boldsymbol{w}_k^{(0)}+\boldsymbol{\epsilon}\right)=\cos_1 \left(\boldsymbol{w}_k^{(0)}+\delta \boldsymbol{x}+\boldsymbol{\epsilon}\right), \forall \boldsymbol{\epsilon}$. Moreover, $\cos_0$ and $cos_1$ are symmetric around $\boldsymbol{w}_k^{(0)}$ and $\boldsymbol{w}_k^{(0)}+\delta \boldsymbol{x}$ respectively by their definition.
Therefore, by using these facts and by also choosing $c=1$, it is

\begin{gather}
	\Delta\text{cos}\left(\boldsymbol{\nu-\eta}\right) \nonumber
	\\=\cos_0 \left(\boldsymbol{w}_k^{(0)}+\frac{\delta \boldsymbol{x}}{2}-\boldsymbol{\eta}\right) - \cos_1 \left(\boldsymbol{w}_k^{(0)}+\frac{\delta \boldsymbol{x}}{2}-\boldsymbol{\eta}\right)
	\\=\cos_1 \left(\boldsymbol{w}_k^{(0)}+\delta \boldsymbol{x}+\frac{\delta \boldsymbol{x}}{2}-\boldsymbol{\eta}\right) - \cos_1 \left(\boldsymbol{w}_k^{(0)}+\frac{\delta \boldsymbol{x}}{2}-\boldsymbol{\eta}\right)
	\\=\cos_1 \left(\boldsymbol{w}_k^{(0)}+\delta \boldsymbol{x}-\left(\frac{\delta \boldsymbol{x}}{2}-\boldsymbol{\eta}\right)\right) - \cos_1 \left(\boldsymbol{w}_k^{(0)}+\frac{\delta \boldsymbol{x}}{2}-\boldsymbol{\eta}\right)
	\\=\cos_1 \left(\boldsymbol{w}_k^{(0)}+\frac{\delta \boldsymbol{x}}{2}+\boldsymbol{\eta}\right) - \cos_1 \left(\boldsymbol{w}_k^{(0)}+\delta \boldsymbol{x}-\frac{\delta \boldsymbol{x}}{2}-\boldsymbol{\eta}\right)
	\\=\cos_1 \left(\boldsymbol{w}_k^{(0)}+\frac{\delta \boldsymbol{x}}{2}+\boldsymbol{\eta}\right) - \cos_0 \left(\boldsymbol{w}_k^{(0)}-\frac{\delta \boldsymbol{x}}{2}-\boldsymbol{\eta}\right)
	\\=\cos_1 \left(\boldsymbol{w}_k^{(0)}+\frac{\delta \boldsymbol{x}}{2}+\boldsymbol{\eta}\right) - \cos_0 \left(\boldsymbol{w}_k^{(0)}+\frac{\delta \boldsymbol{x}}{2}+\boldsymbol{\eta}\right).
\end{gather}

Therefore, 
\begin{gather}
	\Delta\text{cos}\left(\boldsymbol{\nu-\eta}\right) \nonumber
	\\=-\Delta\text{cos}\left(\boldsymbol{w}_k^{(0)}+\frac{\delta \boldsymbol{x}}{2}+\boldsymbol{\eta}\right)
	\\=-\Delta\text{cos}\left(\boldsymbol{\nu+\eta}\right),
\end{gather}
which completes the Proof.
\end{proof}

\begin{proof}[\textbf{Proof of Theorem \ref{thm:equilib}}]
	We will find the equilibrium point of the SoftHebb plasticity rule, i.e.\ the weight $w_{ik}$ that implies $E[\Delta w_{ik}^{(SoftHebb)}]=0$.
	
	We will expand this expected value based on the plasticity rule itself, and on the probability distribution of the input $\boldsymbol{x}$.

	\begin{gather}
		E[\Delta w_{ik}^{(SoftHebb)}]
		=\eta\int_{\boldsymbol{x}} y_k(\boldsymbol{x})\cdot (x_i-u_k(\boldsymbol{x})w_{ik}) p(\boldsymbol{x})d\boldsymbol{x}
		\\=\eta\int_{\boldsymbol{x}} y_k(\boldsymbol{x}) (x_i-\boldsymbol{w}_k\boldsymbol{x}w_{ik}) \left(\sum_{l=1}^K p_l(\boldsymbol{x})P(C_l)\right) d\boldsymbol{x}
		\\=\eta \left[\rule{0cm}{0.7cm}\sum_{l=1}^K \int_{\boldsymbol{x}}x_i y_k(\boldsymbol{x}) p_l(\boldsymbol{x}) P(C_l)d\boldsymbol{x} -\sum_{l=1}^K\boldsymbol{w}_kw_{ik}\int_{\boldsymbol{x}} \boldsymbol{x}y_k(\boldsymbol{x})p_l(\boldsymbol{x})P(C_l) d\boldsymbol{x} \right]. \label{eq:equilibrium}
	\end{gather}
	
	Based on this, we will now show that 
	\begin{gather}
		\left[\boldsymbol{w}_k=\prescript{}{opt}{}\boldsymbol{w}^*_k=\frac{\mu_{p_k}(\boldsymbol{x})}{||\mu_{p_k}(\boldsymbol{x})||} \text{ and } w_{0k}=\prescript{}{opt}{}w_{0k},\, \forall k\right]\label{eq:premise}
		\\\implies E[\Delta w_{ik}^{(SoftHebb)}]=0  \quad \forall i,k.
	\end{gather}
	
	Using the premise \ref{eq:premise}, we can take the following steps, where steps 3, 4, and 6 are the main ones, and steps 1, 2, and 5, as well as Theorem \ref{thm:symmetricmeans} support those.
	
	\begin{enumerate}
		\item The cosine similarity function $u_k(\boldsymbol{x})=\boldsymbol{x}\boldsymbol{w}_k$, as determined by the total weighted input to the neuron $k$, and appropriately normalized, defines a probability distribution centred symmetrically around the vector $\boldsymbol{w}_k$, i.e.\ $\mu_{u_k}(\boldsymbol{x})=\boldsymbol{w}_k$ and $u_k(\mu_{u_k}-\boldsymbol{x})=u_k(\mu_{u_k}+\boldsymbol{x})$. \label{item:point1}
		
		$\boldsymbol{w}_k$, as premised, is equal to the normalized $\mu_{p_k}(\boldsymbol{x})$, i.e.\ the mean of the distribution $p_k(\boldsymbol{x})=p(\boldsymbol{x}|C_k)$, therefore: $\mu_{u_k}(\boldsymbol{x})=\mu_{p_k}(\boldsymbol{x})$.
		
		\item The soft-WTA version of the neuronal transformation, i.e.\ the softmax version of the model's inference is
		\begin{equation}y_k(\boldsymbol{x})=\frac{\exp\left(u_k(\boldsymbol{x})\right)\exp(w_{0k})}{\sum_{l=1}^K \exp\left(u_l(\boldsymbol{x})\right)\exp(w_{0l})},\label{eq:softmax}
		\end{equation}
		But because of the premise \ref{eq:premise} that the parameters of the model $u_k$ are set to their optimal value, it follows that $\exp(u_k(\boldsymbol{x}))=p_k(\boldsymbol{x})$ and $\exp(w_{0k})=P(C_k),\, \forall k$ (see also Theorem \ref{thm:nesslerbias}), therefore
		\begin{equation}y_k(\boldsymbol{x})=\frac{p_k(\boldsymbol{x})P(C_k)}{\sum_{l=1}^K p_l(\boldsymbol{x})P(C_l)}.\label{eq:softmax_p}
		\end{equation}

		\item Eq.\ \ref{eq:equilibrium} involves twice the function $y_k(\boldsymbol{x})p(\boldsymbol{x})=\sum_{l=1}^K y_k(\boldsymbol{x}) p_l(\boldsymbol{x})P(C_l)$. Using the above two points, we will now show that this is approximately equal to $y_k(\boldsymbol{x}) p_k(\boldsymbol{x})P(C_k)$.
		
		\begin{enumerate}
			\item We assume that the ``support'' $\mathbb{O}_k^p$ of the component $p_k(\boldsymbol{x})$, i.e.\ the region where $p_k$ is not negligible, is not fully overlapping with that of other components $p_l$.
			
			In addition, $\mathbb{O}_k^p$ is narrow relative to the input space $\forall k$, because, first, the cosine similarity $u_k(\boldsymbol{x})$ diminishes fast from its maximum at $\arg \max u_{k}=\boldsymbol{w}_k$ in case of non-trivial dimensionality of the input space, and, second, $p_k(\boldsymbol{x})=\exp\left(u_k(\boldsymbol{x})\right)$ applies a further exponential decrease. \label{item:supports}
			
			Therefore, the overlap $\mathbb{O}_k^p\cap\mathbb{O}_l^p$ is small, or none,  $\forall l\neq k$.
			
			If $\mathbb{O}_k^y$ is the ``support'' of $y_k$, then this is even narrower than $\mathbb{O}_k^p$, due to the softmax.
			
			As a result, the overlap $\mathbb{O}_k^y\cap\mathbb{O}_l^p$ of $y_k$ and $p_l$ is even smaller than the overlap $\mathbb{O}_k^p\cap\mathbb{O}_l^p$ of $p_k$ and $p_l$, $\forall l\neq k$.
			
			\item Because of the numerator in Eq.\ \ref{eq:softmax_p}, the overlap $\mathbb{O}_k^y\cap\mathbb{O}_k^p$ of $y_k$ and $p_k$ is large.
			
			Based on these two points, the overlaps $\mathbb{O}_k^y\cap\mathbb{O}_l^p$ can be neglected for $l\neq k$, and it follows that
			\begin{equation}\sum_{l=1}^K y_k(\boldsymbol{x}) p_l(\boldsymbol{x})P(C_l)\approx y_k(\boldsymbol{x}) p_k(\boldsymbol{x})P(C_k).
				\label{eq:approx}
			\end{equation}
		\end{enumerate}
		
		Therefore, we can write Eq.\ \ref{eq:equilibrium} as
		\begin{gather}
			E[\Delta w_{ik}^{(SoftHebb)}]\nonumber
			\\\approx \eta \left[\rule{0cm}{0.7cm}\int_{\boldsymbol{x}}x_i y_k(\boldsymbol{x}) p_k(\boldsymbol{x}) P(C_k)d\boldsymbol{x} -\boldsymbol{w}_kw_{ik}\int_{\boldsymbol{x}} \boldsymbol{x}y_k(\boldsymbol{x})p_k(\boldsymbol{x}) P(C_k)d\boldsymbol{x} \right]. \label{eq:equilibrium_approx}
		\end{gather}
		
		Next, we aim to show that the integrals $\int_{\boldsymbol{x}}x_i y_k(\boldsymbol{x}) p_k(\boldsymbol{x}) d\boldsymbol{x}$ and $\int_{\boldsymbol{x}}\boldsymbol{x} y_k(\boldsymbol{x}) p_k(\boldsymbol{x}) d\boldsymbol{x}$ involved in that Equation equal the mean value of $x_i$ and $\boldsymbol{x}$ respectively according to the distribution $p_k$.
		
		To show this, we observe that the integrals are indeed mean values of $x_i$ and $\boldsymbol{x}$ according to a probability distribution, and specifically the distribution $y_kp_k$. We will first show that the distribution $p_k$ is symmetric around its mean $\mu_{p_k}(\boldsymbol{x})$. Then we will show that $y_k(\boldsymbol{x})$ is also symmetric around the same mean. Then we will use the fact that the product of two such symmetric distributions with common mean, such as $y_kp_k$, is a distribution with the same mean, a fact that we will prove in Theorem A.1.

		\item 
		Because the cosine similarity function $u_k(\boldsymbol{x})$ is symmetric around the mean value $\mu_{u_k}(\boldsymbol{x})=\boldsymbol{w}_k$: $u_k(\mu_{u_k}-\boldsymbol{x})=u_k(\mu_{u_k}+\boldsymbol{x})$, it follows that  $p_k(\mu_{u_k}-\boldsymbol{x})=\exp(u_k(\mu_{u_k}-\boldsymbol{x}))=\exp(u_k(\mu_{u_k}+\boldsymbol{x}))=p_k(\mu_{u_k}+\boldsymbol{x})$.
		
		Therefore, $p_k(\boldsymbol{x})=\exp\left(u_k(\boldsymbol{x})\right)$ does have the sought property of symmetry, around $\mu_{p_k}(\boldsymbol{x})$.
		
		In point \ref{item:point1} of this list we have also shown that $\mu_{u_k}(\boldsymbol{x})=\mu_{p_k}(\boldsymbol{x})$, thus $p_k$ is symmetric around its own mean $\mu_{p_k}(\boldsymbol{x})$.
		
		\item The reason why the softmax output $y_k$ is symmetric around the same mean as $p_k$ consists in the following arguments:
		\begin{enumerate}
			\item The numerator $p_k$ of the $y_k$ softmax in Eq.\ \ref{eq:softmax_p} is symmetric around $\mu_{p_k}(\boldsymbol{x})$, as was shown in the preceding point.
			\item The denominator of Eq.\ \ref{eq:softmax_p}, i.e.\ $\sum_{l=1}^K p_l(\boldsymbol{x})P(C_l)$ is also symmetric around $\mu_{p_k}(\boldsymbol{x})$ in $\mathbb{O}_k^p$, where $\mathbb{O}_k^p$ is the ``support'' of the $p_k$ distribution, i.e.\ the region where the numerator $p_k$ is not negligible.
			
			This is because:
			\begin{enumerate}
				\item We assume that the data is distributed on the unit hypersphere of the input space according to $p(\boldsymbol{x})$ without a bias. Therefore the total contribution of components $\sum_{l\neq k}p_l(\boldsymbol{x})P(C_l)$ to $p(\boldsymbol{x})$ in that neighbourhood $\mathbb{O}_k^p$ is approximately symmetric around $\mu_{p_k}(\boldsymbol{x})$.
				
				\item $\sum_{l\neq k}p_l(\boldsymbol{x})P(C_l)$ is not only approximately symmetric, but also its remaining asymmetry has a negligible contribution to $p(\boldsymbol{x})$ in $\mathbb{O}_k^p$, because $p(x)$ in $\mathbb{O}_k^p$ is mostly determined by $p_k(\boldsymbol{x})$.
				
				This is true, because as we showed in point \ref{item:supports}, $\mathbb{O}_k^p \cap \mathbb{O}_l^p$ is a narrow overlap $\forall l\neq k$.
				
				\item $p_k$ is also symmetric, therefore, the total sum $\sum_{l=1}^K p_l(\boldsymbol{x})P(C_l)$ is symmetric around $\mu_{p_k}(\boldsymbol{x})$.
				
			\end{enumerate} 
			
			\item The inverse fraction $\frac{1}{f}$ of a symmetric function $f$ is also symmetric around the same mean, therefore the inverse of the denominator $\frac{1}{p(\boldsymbol{x})}=\frac{1}{\sum_{l=1}^K p_l(\boldsymbol{x})P(C_l)}$ is also symmetric around the mean value $\mu_{p_k}(\boldsymbol{x})$.
			
			\item The normalized product of two distributions that are symmetric around the same mean is a probability distribution with the same mean. We prove this formally in Theorem \ref{thm:symmetricmeans} and in its Proof at the end of the present Appendix \ref{sec:proofs}.
			
			Therefore $y_k=p_k \frac{1}{p(\boldsymbol{x})}$ is indeed symmetric around $\mu_{y_k}(\boldsymbol{x})=\mu_{u_k}(\boldsymbol{x})=\boldsymbol{w}_k=\mu_{p_k}(\boldsymbol{x})$.
		\end{enumerate}
		
		In summary, \begin{equation}\mu_{p_k}(\boldsymbol{x})=\mu_{y_k}(\boldsymbol{x}),
		\end{equation}
		and, due to the symmetry, \begin{equation}p_k(\mu_{p_k}+\boldsymbol{x})=p_k(\mu_{p_k}-\boldsymbol{x}),\end{equation} \begin{equation}y_k(\mu_{p_k}+\boldsymbol{x})=y_k(\mu_{p_k}-\boldsymbol{x}).\end{equation}
		
		\item Because the means of $y_k(\boldsymbol{x})$ and of $p_k(\boldsymbol{x})$ are both equal to $\mu_{p_k}(\boldsymbol{x})$, and because both distributions are symmetric around that mean, the probability distribution $y_k(\boldsymbol{x})p_k(\boldsymbol{x})P(C_k)/I_k$, where $I_k$ is the normalization constant, also has a mean $\mu_{y_kp_k}(\boldsymbol{x})$ equal to $\mu_{p_k}(\boldsymbol{x})$:
		
		\begin{equation}\int_{\boldsymbol{x}} \boldsymbol{x} y_k(\boldsymbol{x})p_k(\boldsymbol{x})P(C_k)/I_k\, d\boldsymbol{x}= \mu_{p_k}(\boldsymbol{x}).
		\end{equation}
		We prove this formally in Theorem \ref{thm:symmetricmeans} and in its Proof at the end of the present Appendix \ref{sec:proofs}.
		
		Therefore, the \textbf{first component} of the sum in Eq.\ \ref{eq:equilibrium_approx} is
		\begin{equation}\int_{\boldsymbol{x}} x_i y_k(\boldsymbol{x})p_k(\boldsymbol{x})P(C_k) d\boldsymbol{x}=I_k \cdot \mu_{p_k}(x_i)
		\end{equation}
		and, similarly, the \textbf{second component} is
		\begin{equation} -\boldsymbol{w}_kw_{ik}\int_{\boldsymbol{x}} \boldsymbol{x} y_k(\boldsymbol{x})p_k(\boldsymbol{x})P(C_k) d\boldsymbol{x}=-I_k \boldsymbol{w}_kw_{ik}\cdot \mu_{p_k}(\boldsymbol{x}).
		\end{equation}
	\end{enumerate}
	
	From the above conclusions about the two components of the sum in Eq.\ \ref{eq:equilibrium_approx}, it follows that 
	
	\begin{gather}
		E[\Delta w_{ik}^{(SoftHebb)}]=I_k \cdot \mu_{p_k}(x_i)-I_k \boldsymbol{w}_kw_{ik}\cdot \mu_{p_k}(\boldsymbol{x})
		\\=I_k \cdot (\mu_{p_k}(x_i) - \boldsymbol{w}_k \mu_{p_k}(\boldsymbol{x}) \cdot w_{ik})
		\\=I_k \cdot (\mu_{p_k}(x_i) - ||\mu_{p_k}(\boldsymbol{x})||\cdot w_{ik})
		\\=I_k \cdot \left(\mu_{p_k}(x_i) -||\mu_{p_k}(\boldsymbol{x})|| \frac{\mu_{p_k}(x_i)}{||\mu_{p_k}(\boldsymbol{x})||}\right)
		\\=0.
	\end{gather}
	
	Therefore, it is indeed true that $\left[\boldsymbol{w}_k=\prescript{}{opt}{}\boldsymbol{w}^*_k=\frac{\mu_{p_k}(\boldsymbol{x})}{||\mu_{p_k}(\boldsymbol{x})||} \, \forall k\right] \implies E[\Delta w_{ik}^{(SoftHebb)}]=0  \, \forall i,k$.
	
	Thus, the optimal weights of the model $\prescript{}{opt}{}\boldsymbol{w}^*_k=\frac{\mu_{p_k}(\boldsymbol{x})}{||\mu_{p_k}(\boldsymbol{x})||} \, \forall k$ are equilibrium weights of the SoftHebb plasticity rule and network.
	
	However, it is not yet clear that the weights that are normalized to a unit vector are those that the rule converges to, and that other norms of the vector are unstable. We will now give an intuition, and then prove that this is the case.
	
	The multiplicative factor $u_k$ is common between our rule and Oja's rule \citep{oja1982simplified}. The effect of this factor is known to normalize the weight vector of each neuron to a length of one \citep{oja1982simplified}, as also shown in similar rules with this multiplicative factor \citep{krotov2019unsupervised}. We prove that this is the effect of the factor also in the SoftHebb rule, separately in Theorem \ref{thm:SoftHebbnorm} and its Proof, provided at the end of the present Appendix \ref{sec:proofs}.
	
	This proves Theorem \ref{thm:equilib}, and satisfies the optimality condition derived in Theorem \ref{thm:optimal}.
\end{proof}

\begin{proof}[\textbf{Proof of Theorem \ref{thm:SoftHebbnorm}}] Using a technique similar to \citet{krotov2019unsupervised}, we write the SoftHebb plasticity rule as a differential equation \begin{equation}
		\tau \frac{dw_{ik}^{(SoftHebb)}}{dt}=\Delta w_{ik}^{(SoftHebb)}(t)=
		\eta \cdot y_k(t)  \cdot \left(x_i(t)-u_k(t)w_{ik}(t)\right). \label{eq:softhebb_differential}
	\end{equation}
	In this formulation, synapses undergo continuous changes, with an instantaneous rate $\Delta w_{ik}^{(SoftHebb)}(t)/\tau$. The time constant of the plasticity dynamics is $\tau$.
	
	The derivative of the norm of the weight vector is \begin{equation}
		\frac{d||\boldsymbol{w}_k||}{dt}=\frac{d(\boldsymbol{w}_k \boldsymbol{w}_k)}{dt}=2\boldsymbol{w}_k\frac{d\boldsymbol{w}_k }{dt}.
	\end{equation}
	Replacing $\frac{d\boldsymbol{w}_k }{dt}$ in this equation with the SoftHebb rule of Eq.\ \ref{eq:softhebb_differential}, it is
	\begin{align}
		\begin{split}
			\frac{d||\boldsymbol{w}_k^{SoftHebb)}||}{dt}=2\frac{\eta}{\tau}  \boldsymbol{w}_k \cdot y_k  \cdot \left(\boldsymbol{x}-u_k\boldsymbol{w}_k\right)&=2\frac{\eta}{\tau}  \boldsymbol{w}_k  \cdot y_k   \cdot \left(\boldsymbol{x} -\boldsymbol{w}_k \boldsymbol{x} \boldsymbol{w}_k \right)\\&=2\frac{\eta}{\tau}  u_k  y_k   \cdot \left(1-||\boldsymbol{w}_k ||\right).
		\end{split}
	\end{align}
	This differential equation shows that the derivative of the norm of the weight vector increases if $||\boldsymbol{w}_k||<1$ and decreases if $||\boldsymbol{w}_k||>1$, such that the weight vector tends to a sphere of radius $1$, which proves the Theorem.	
\end{proof}

\begin{proof}[\textbf{Proof of Theorem \ref{thm:nesslerbias}}]
	Similarly to the Proof of Theorem \ref{thm:equilib}, we find the equilibrium parameter $w_{0k}$ of the SoftHebb plasticity rule.

	\begin{gather}
		E[\Delta w_{0k}^{(SoftHebb)}]
		\\=\eta \int_{\boldsymbol{x}} \left(y_ke^{-w_{0k}} -1\right) p(\boldsymbol{x}) d\boldsymbol{x}
		\\=\eta\left(e^{-w_{0k}}\int_{\boldsymbol{x}} y_k(\boldsymbol{x}) p(\boldsymbol{x}) d\boldsymbol{x} -1\right)
		\\=\eta\left(e^{-w_{0k}}\int_{\boldsymbol{x}} \frac{p(\boldsymbol{x}|C_k)P(C_k)}{p(\boldsymbol{x})} p(\boldsymbol{x}) d\boldsymbol{x} -1\right)
		\\=\eta\left(e^{-w_{0k}}P(C_k) \int_{\boldsymbol{x}} p(\boldsymbol{x}|C_k)d\boldsymbol{x} -1\right)
		\\=\eta\left(e^{-w_{0k}}P(C_k) -1\right),
	\end{gather}
	
	In the above, we have replaced $y_k$ by its definition, i.e.\ $y_k(\boldsymbol{x})=p(C_k|\boldsymbol{x})=\frac{p(\boldsymbol{x}|C_k)P(C_k)}{p(\boldsymbol{x})} p(\boldsymbol{x})$.
	
	Therefore, using this form of $E[\Delta w_{0k}^{(SoftHebb)}]$, and setting this expectation to zero as a condition for equilibrium, we find the equilibrium value of $w_{0k}$:
	\begin{gather}
		E[\Delta w_{0k}^{(SoftHebb)}]=0 \implies \nonumber
		\\w_{0k}^{SoftHebb}=\ln P(C_k),
	\end{gather}
	which proves Theorem \ref{thm:nesslerbias} and shows the SoftHebb plasticity rule of the neuronal bias finds the optimal parameter of the Bayesian generative model as defined by Eq.\ \ref{eq:G0} of Theorem \ref{thm:optimal}.
\end{proof}
\begin{thm}
	\label{thm:symmetricmeans}
	Given two probability density functions (PDF) $y(x)$ and $p(x)$ that are both centred symmetrically around the same mean value $\mu$, their product $y(x)p(x)$, normalized appropriately, is a PDF with the same mean, i.e.
	\begin{equation}
		\begin{cases}
			\int_{x}xy(x)dx=\mu \\
			\int_{x}xp(x)dx=\mu \\
			p(\mu+x)=p(\mu-x) \\
			y(\mu+x)=y(\mu-x)
		\end{cases}
		\\ \implies
		\frac{1}{I}\int_{x} x y(x)p(x)dx=\mu.
	\end{equation}
\end{thm}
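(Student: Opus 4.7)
The plan is to reduce the claim to the classical fact that the integral of an odd function over $\mathbb{R}$ vanishes. First I would set $I \coloneqq \int_x y(x)p(x)\,dx$; since $y$ and $p$ are PDFs with matching support around $\mu$, this is a finite positive normalization constant, and the normalized product $\frac{1}{I} y(x) p(x)$ is indeed a PDF. It suffices to show that its first moment equals $\mu$, or equivalently that $\int_x (x-\mu)\, y(x) p(x)\, dx = 0$.

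Next I would perform the change of variables $t = x - \mu$ so that $dx = dt$ and the integral becomes
\begin{equation}
\int_{t} t \cdot y(\mu + t)\, p(\mu + t)\, dt.
\end{equation}
Define $g(t) \coloneqq y(\mu + t)\, p(\mu + t)$. The two symmetry hypotheses $y(\mu + t) = y(\mu - t)$ and $p(\mu + t) = p(\mu - t)$ give immediately $g(-t) = y(\mu - t)p(\mu - t) = y(\mu + t)p(\mu + t) = g(t)$, so $g$ is an even function of $t$. Consequently the integrand $t \cdot g(t)$ is odd in $t$, and its integral over the symmetric domain $\mathbb{R}$ vanishes.

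Dividing through by $I$ then yields $\frac{1}{I} \int_x x\, y(x) p(x)\, dx = \mu$, which is the claim. I do not foresee a serious obstacle: the only subtlety is mild regularity, namely that $I$ is strictly positive (which requires $y$ and $p$ to have overlapping support in a neighbourhood of $\mu$) and that $g$ is integrable against $|t|$ so the odd-function argument is valid (guaranteed since $y$ and $p$ are PDFs with finite means, so $\int |t| g(t) dt \le \sup y \cdot \int |t| p(\mu+t) dt < \infty$ whenever either factor is bounded, and more generally by dominated convergence under the standing assumptions). These are integrability technicalities rather than genuine difficulties, so the core proof is the three-line even/odd symmetry argument above.
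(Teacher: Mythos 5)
Your proof is correct and follows essentially the same route as the paper's: both arguments use the change of variables $t = x-\mu$ together with the symmetry of $y$ and $p$ about $\mu$ to show that the centred first moment $\int (x-\mu)\,y(x)p(x)\,dx$ vanishes. The paper merely executes this by splitting the integral at $\mu$ and cancelling the two halves term by term, whereas you package the same cancellation as the vanishing of an odd integrand; your added remarks on $I>0$ and integrability are sensible and do not change the substance.
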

\begin{proof}[\textbf{Proof of Theorem \ref{thm:symmetricmeans}}]
	\begin{gather}
		\int_{-\infty}^{+\infty} x y(x)p(x)dx
		=\int_{-\infty}^{\mu} x y(x)p(x)dx +\int_{\mu}^{+\infty} x y(x)p(x)dx
	\end{gather}
	We will derive a different form for each of these two integrals.
	\begin{gather}
		\begin{align}
		I_1 &\coloneqq\int_{x=-\infty}^{\mu} x y(x)p(x)dx 
		\\ &=\int_{x=-\infty}^{\mu} (\mu- u) y(\mu- u)p(\mu- u)d(\mu- u) \\ \tag*{We defined $x=\mu- u$.} 
		\\ &=\int_{ u=+\infty}^{0} (\mu- u) y(\mu- u)p(\mu- u)d(\mu- u)   \\ \tag*{We substituted the integration limits accordingly.}  
		\\ &=-\int_{+\infty}^{0} (\mu- u) y(\mu- u)p(\mu- u)du   \\ \tag*{Because $d(\mu- u)=-du$.}   
		\\ &=-\int_{+\infty}^{0} \mu y(\mu-u)p(\mu-u)du
		+\int_{+\infty}^{0} u y(\mu-u)p(\mu-u)du 
		\\ &=-\int_{u=-\infty}^{0} \mu y(\mu+u)p(\mu+u)d(-u)
		+\int_{u=+\infty}^{0} u y(\mu-u)p(\mu-u)du \\ \tag*{We substituted the first integration variable by $-u$ and changed the limits accordingly.} 
		\\ &=-\int_{-\infty}^{0} \mu y(\mu+u)p(\mu+u)d(-u)
		-\int_{0}^{+\infty} u y(\mu-u)p(\mu-u)du \\ \tag*{We inverted the direction of the second integration.} 
		\\ &=\int_{-\infty}^{0} \mu y(\mu+u)p(\mu+u)du
		-\int_{0}^{+\infty} u y(\mu-u)p(\mu-u)du  \\ \tag*{Because $d(-u)=-du$.} 
		\\ &=\int_{-\infty}^{0} \mu y(\mu-u)p(\mu-u)du
		-\int_{0}^{+\infty} u y(\mu+u)p(\mu+u)du  \\ \tag*{We used the symmetry of the two distributions around their mean.} 
		\\ &=\mu\int_{-\infty}^{\mu}  y(x)p(x)dx
		-\int_{0}^{+\infty} u y(\mu+u)p(\mu+u)du.   \\ \tag*{We substituted the variable and the limits by $x=\mu- u$.} 
	\end{align}
	\end{gather}
	
	\begin{gather}
		\begin{align}
		I_2 &\coloneqq\int_{\mu}^{+\infty} x y(x)p(x)dx 
		\\ &=\int_{0}^{+\infty} (\mu+u) y(\mu+u)p(\mu+u)d(\mu+u) 
		\\ &=\int_{0}^{+\infty} \mu y(\mu+u)p(\mu+u)du+\int_{0}^{+\infty} u y(\mu+u)p(\mu+u)du 
		\\ &=\mu\int_{\mu}^{+\infty}  y(x)p(x)dx+\int_{0}^{+\infty} u y(\mu+u)p(\mu+u)du. 
	\end{align}
	\end{gather}
	
	Therefore,
	\begin{gather}
		\int_{-\infty}^{+\infty} x y(x)p(x)dx
		=I_1+I_2
		\\=\mu\int_{-\infty}^{\mu}  y(x)p(x)dx
		-\int_{0}^{+\infty} u y(\mu+u)p(\mu+u)du \\+\mu\int_{\mu}^{+\infty}  y(x)p(x)dx+\int_{0}^{+\infty} u y(\mu+u)p(\mu+u)du= \mu\cdot I,
	\end{gather}
	where $I=\int_{-\infty}^{+\infty} y(x)p(x)dx$.
	
\end{proof}
\subsection{Details to theoretical support of alternate activation functions (Section \ref{sec:activation_fn})}
\label{app:alternate_act}
Theorem \ref{thm:optimal}, which concerns the synaptic plasticity rule in Eq.\ \ref{eq:synplast}, was proven for the model of Definition \ref{def:model}, which uses a mixture of natural exponential component distributions, i.e.\ with base e (Eq.\ \ref{eq:multinomial}): \begin{equation}q_k\coloneqq q(\boldsymbol{x}|C_k; \boldsymbol{w}_k)=e^{u_k}.
\end{equation}
This implied an equivalence to a WTA neural network with natural exponential activation functions (Section \ref{sec:neuro_exp}).
However, it is simple to show that these results can be extended to other model probability distributions, and thus other neuronal activations.

Firstly, in the simplest of the alternatives, the base of the exponential function can be chosen differently. In that case, the posterior probabilities that are produced by the model's Bayesian inference, i.e.\ the network outputs, $Q(C_k|\boldsymbol{x};\boldsymbol{w})=y_k(\boldsymbol{x};\boldsymbol{w})$ are given by a softmax with a different base. If the base of the exponential is $b$, then \begin{equation}Q(C_k|\boldsymbol{x};\boldsymbol{w})=y_k=\frac{b^{u_k+w_{0k}}}{\sum_{l=1}^{K}b^{u_l+w_{0l}}}.
\end{equation}
It is obvious in the Proof of Theorem \ref{thm:optimal} in Appendix \ref{sec:proofs} that the same proof also applies to the changed base, if we use the appropriate logarithm for describing KL divergence. Therefore, the optimal parameter vector does not change, and the SoftHebb plasticity rule also applies to the SoftHebb model with a different exponential base. This change of the base in the softmax bears similarities to the change of its exponent, in a technique that is called Temperature Scaling and has been proven useful in classification \citep{hinton2015distilling}.

Secondly, the more conventional type of Temperature Scaling, i.e.\ that which scales exponent, is also possible in our model, while maintaining a Bayesian probabilistic interpretation of the outputs, a neural interpretation of the model, and the optimality of the plasticity rule. In this case, the model becomes \begin{equation}Q(C_k|\boldsymbol{x};\boldsymbol{w})=y_k=\frac{e^{(u_k+w_{0k})/T}}{\sum_{l=1}^{K}e^{(u_l+w_{0l})/T}}.
\end{equation}
The Proof of Theorem \ref{thm:optimal} in Appendix \ref{sec:proofs} also applies in this case, with a change in Eq.\ \ref{eq:cosine}, but resulting in the same solution. Therefore, the SoftHebb synaptic plasticity rule is applicable in this case too. The solution for the neuronal biases, i.e.\ the parameters of the prior in the Theorem (Eq.\ \ref{eq:G0}), also remains the same, but with a factor of $T$: 	$\prescript{}{opt}{}w_{0k}=T\ln P(C_k)$.

Finally, and most generally, the model can be generalized to use any non-negative and monotonically increasing function $h(x)$ for the component distributions, i.e.\ for the activation function of the neurons, assuming $h(x)$ is appropriately normalized to be interpretable as a probability density function. In this case the model becomes \begin{equation}Q(C_k|\boldsymbol{x};\boldsymbol{w})=y_k=\frac{h(u_k)\cdot w_{0k}}{\sum_{l=1}^{K}h(u_l)\cdot w_{0l}}.
\end{equation}
Note that there is a change in the parametrization of the priors into a multiplicative bias $\boldsymbol{w}_0$, compared to the additive bias in the previous versions above. This change is necessary in this general case, because not all functions have the property $e^{a+b}=e^a\cdot e^b$ that we used in the exponential case.
We can show that the optimal weight parameters remain the same as in the previous case of an exponential activation function, also for this more general case of activation $h$.
It can be seen in the Proof of Theorem \ref{thm:optimal}, that for a more general function $h(x)$ than the exponential, Eq.\ \ref{eq:cosine} would instead become:
\begin{gather}
	\prescript{}{opt}{}\boldsymbol{w}_k  =\arg \min_{ \boldsymbol{w}_k } D_{KL}(p_k||q_k) \nonumber
	=\arg \max_{ \boldsymbol{w}_k }\int_{\boldsymbol{x}}  p_k  \ln h(u_k)  d\boldsymbol{x}  \nonumber \\
	=\arg \max_{ \boldsymbol{w}_k }\int_{\boldsymbol{x}}  p_k  \ln h(\cos\left( \boldsymbol{w}_k, \boldsymbol{x}\right))  d\boldsymbol{x}	\nonumber \\
	=\arg \max_{ \boldsymbol{w}_k }\mu_{p_k}  \left(\ln h(\cos\left( \boldsymbol{w}_k, \boldsymbol{x}\right))\right) \nonumber\\
	=\arg \max_{ \boldsymbol{w}_k }\mu_{p_k}  \left(g(\cos\left( \boldsymbol{w}_k, \boldsymbol{x}\right))\right),	\label{eq:argmin_h_mu}
\end{gather}
where $g(x)=\ln h(x)$.
We have assumed that $h$ is an increasing function, therefore $g$ is also increasing.
The cosine similarity is symmetrically decreasing as a function of $\boldsymbol{x}$ around $\boldsymbol{w}_k$. Therefore, the function $g'(\boldsymbol{x})=g(\cos(\boldsymbol{w}_k, \boldsymbol{x}))$ also decreases symmetrically around $\boldsymbol{w}_k$. Thus, the mean of that function $g'$ under the probability distribution $p_k$ is maximum when $\mu_{p_k}=\boldsymbol{w}_k$.
As a result, Eq.\ \ref{eq:argmin_h_mu} implies that in this more general model too, the optimal weight vector is $\prescript{}{opt}{}\boldsymbol{w}_k=c\cdot \mu_{p_k}\left(\boldsymbol{x}\right), c\in\mathbb{R}$, and, consequently, it is also optimized by the same SoftHebb plasticity rule.

The implication of this is that the SoftHebb WTA neural network can use activation functions such as rectified linear units (ReLU), or other non-negative and increasing activations, such as rectified polynomials \citep{krotov2019unsupervised} etc., and maintain its generative properties, its Bayesian computation, and the theoretical optimality of the plasticity rule. A more complex derivation of the optimal weight vector for alternative activation functions, which was specific to ReLU only, and did not also derive the associated long-term plasticity rule for our problem category (Definition \ref{def:data}), was provided by \citet{moraitis2020shortterm}.

\subsection{Hyperparameters}
\label{sec:hyperparams}
We tuned the hyperparameters of the learning algorithms for each experiment. The tuned values that we found for MNIST are shown in the table below.

\begin{table}[h]
	\centering
	\begin{tabular}{rccc}
		& \textbf{Hard WTA} & \textbf{Soft WTA} & \textbf{Backpropagation} \\ \hline
		\multicolumn{4}{c}{\textbf{100 epochs}} \\ \hline
		Optimizer & - & - & Adam \\
		Softmax base & - & 1000 & - \\
		Initial learning rate & 0.05 & 0.03 & 0.001 \\
		Learning rate decay & Linear & Linear & - \\
		Minibatch size & 128 & 128 & 64 \\ \hline \hline
		\multicolumn{4}{c}{\textbf{1 epoch}} \\ \hline
		Optimizer & - & - & SGD \\
		Softmax base & - & 200 & - \\
		Initial learning rate & 0.55 & 0.55 & 0.2 \\
		Learning rate decay & Exponential & Exponential & - \\
		Minimum learning rate & 0.0055 & 0.0055 & - \\
		Minibatch size & 1 & 1 & 4 \\ \hline
	\end{tabular}
\caption{Training hyperparameters.}
\end{table}

For Fashion-MNIST, the same values are best, except that an initial learning rate value of 0.065 performed better for the WTA networks.

\subsection{Details to adversarial attack experiments}
\label{app:adversarial}
\begin{figure}[h]
	\centering
	\includegraphics[width = 140mm]{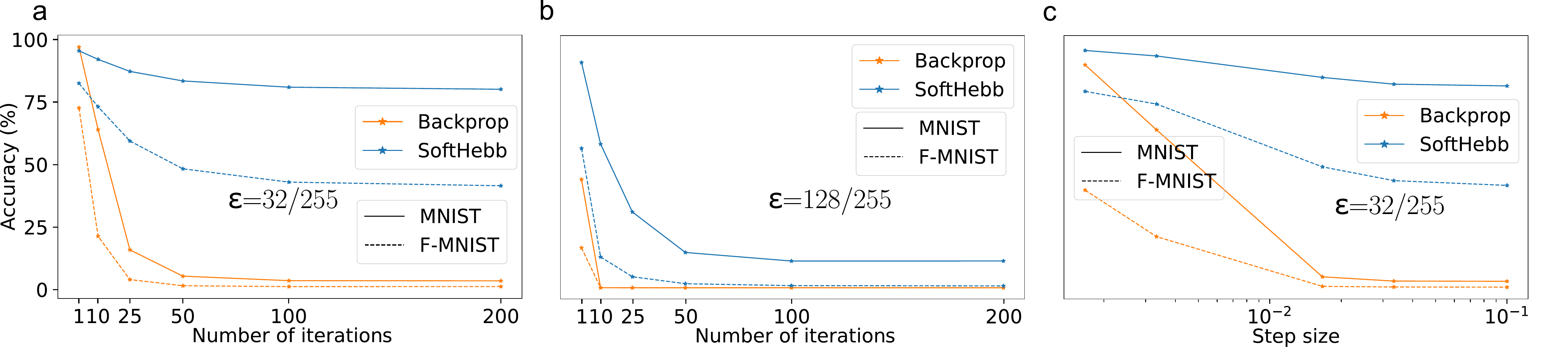}
	\caption{Adversarial-attack hyperparameter tuning. Accuracy as a function of number of PGD iterations, and of step size for different attack strength size $\epsilon$.}
	\label{fig:iterationcurves}
\end{figure}

We used ``Foolbox'', a Python library for adversarial attacks \citep{rauber2017foolbox}. PGD has a few parameters that influence the effectiveness of the attack. Namely, $\epsilon$ which is a parameter determining the size $\epsilon$ of the perturbation, the number of iterations for the attack's gradient ascent, the step size per iteration, and a number of possible random restarts per attacked sample. Here we chose 5 random restarts. Then we found that 200 iterations are sufficient for both MNIST and F-MNIST (Fig. \ref{fig:iterationcurves}a and b). Then, using 200 iterations, and different $\epsilon$ values, we searched for a sufficiently good step size. We found that relative to $\epsilon$ a step size value of $\frac{0.01}{0.3}\epsilon\approx0.33\epsilon$ (which is also the default value of the toolbox that we used) is a good value. An example curve for $\epsilon=32/255$ is shown in Fig. \ref{fig:iterationcurves}c for MNIST and F-MNIST.

\section{Discussion}
We have described SoftHebb, a highly biologically plausible neural algorithm that is founded on a Bayesian ML-theoretic framework. The model consists of elements fully compatible with conventional ANNs. It was previously not known which plasticity rule should be used to learn a Bayesian generative model of the input distribution in ANN WTA networks. Here we derived this rigorously. Moreover, we showed that Hard-WTA networks, and neurons with other activation functions can be described within the same framework as variations of the probabilistic model. This theory could provide a new foundation for normative Hebbian ANN designs with practical significance.

SoftHebb's properties are highly sought-after by efficient neuromorphic learning chips. It is unsupervised, local, and requires no error or other feedback currents from upper layers, thus solving hardware-inefficiencies and bio-implausibilities of backpropagation such as weight-transport and update-locking. However, a limitation of our study is that we have not experimented with deeper networks that could enable true applicability in complex datasets. This will require further experiments that are beyond the single-layer theoretical scope of the present work. A convolutional implementation could become the foundation of such experiments and could provide insights into the role of WTA microcircuits in larger cortical networks with localized receptive fields \citep{pogodin2021towards}, similar to area V1 of cortex \citep{hubel1962receptive}. This would be a radically different approach from backpropagation and its approximations, circumventing their key limitations by not relying on any feedback signals. The possibility that this may reach competitive accuracies in multilayer datasets becomes more realistic as a consequence of the present work. The ability of SoftHebb to minimize cross-entropy without supervision, and its fast and robust learning, are properties that may conceivably support a multilayer learning algorithm without any feedback signals \citep{journe2022hebbian}.

In addition to its future potential, surprisingly, SoftHebb already has practical advantages beyond its efficiency. Specifically, it surpasses backpropagation in accuracy, when training time and network size are limited. Moreover, in a demonstration that goes beyond the common greedy-training approach to Hebbian networks, SoftHebb achieves update-unlocked operation in practice, by updating the first layer before the input's processing by the next layer. It is intriguing that, through its biological plausibility, emerge properties commonly associated with biological intelligence, such as speed of learning, and substantially increased robustness to noise and adversarial attacks. Importantly, robustness emerges without specialized defences. Furthermore, SoftHebb tends to not merely be robust to attacks, but actually deflect them as specialized SOTA defences aim to do \citep{qin2020deflecting}. We also demonstrated the ability of SoftHebb to generate synthetic objects as interpolations of true object classes.

All in all, the algorithm has several properties that are individually interesting, novel, and worth future exploration.
Combined, however, SoftHebb's properties already enable certain applications that are small-scale but infeasible with backpropagation-based learning. For example, fast, on-line, unsupervised learning of simple tasks by edge-sensing neuromorphic devices, operating in noisy conditions, with a small battery and only local processing, requires those algorithmic properties that we have demonstrated here.

\bibliography{references.bib}
\bibliographystyle{icml2022}
\end{document}